\documentclass{article}

\usepackage{microtype}
\usepackage{graphicx}
\usepackage{subfigure}
\usepackage{booktabs} 
\usepackage{algolyx}

\usepackage{color}
\usepackage{dsfont}
\usepackage{lmodern}
\usepackage{float}
\usepackage{amssymb}
\usepackage{amsthm}
\usepackage[dvipsnames]{xcolor}
\usepackage{pifont}

\usepackage[english]{babel}

\newtheorem{theorem}{Theorem}
\newtheorem{lemma}[theorem]{Lemma}

\usepackage{multirow}
\usepackage{makecell}

\usepackage{tabularx}
\usepackage{mathtools}
\usepackage{bbm}
\usepackage{comment}
\usepackage{wrapfig}

\usepackage{hyperref}


\usepackage[preprint, nonatbib]{neurips_2021}

\global\long\def\bx{\mathbf{x}}%

\global\long\def\bX{\mathbf{X}}%

\global\long\def\by{\mathbf{y}}%

\global\long\def\bK{\boldsymbol{K}}%

\global\long\def\and{\cap}%

%

%

%

%

%

%

%

%

%

%

%

%

%

%

%

%

%

%

%

%

%

%
 
%

%

%

%

%

%

%

%

%

%

%

%


%

%
 
%

%

%

%

%

%

%

%

%

%

%

%

%

%

%

%

%

%

%
 
%

%

%

\global\long\def\idenmat{\mathbf{I}}%

\global\long\def\mutinfo{\mathbb{I}}%

\newcommand{\tick}{\textcolor{ForestGreen}{\ding{51}}}

\newcommand{\cross}{\textcolor{BrickRed}{\ding{55}}}

\title{Tuning Mixed Input Hyperparameters on the Fly for Efficient Population Based AutoRL}

\author{Jack Parker-Holder\\
	jackph@robots.ox.ac.uk \\
       University of Oxford
       \And
       Vu Nguyen\\
       vutngn@amazon.com \\
       Amazon
        \AND
       Shaan Desai \\
       shaan@robots.ox.ac.uk \\
       University of Oxford
        \And
       Stephen J. Roberts \\ 
       sjrob@robots.ox.ac.uk \\
       University of Oxford}

\begin{document}

\maketitle

\begin{abstract}
Despite a series of recent successes in reinforcement learning (RL), many RL algorithms remain sensitive to hyperparameters. As such, there has recently been interest in the field of AutoRL, which seeks to automate design decisions to create more general algorithms. Recent work suggests that population based approaches may be effective AutoRL algorithms, by learning hyperparameter schedules on the fly. In particular, the PB2 algorithm is able to achieve strong performance in RL tasks by formulating online hyperparameter optimization as time varying GP-bandit problem, while also providing theoretical guarantees. However, PB2 is only designed to work for \emph{continuous} hyperparameters, which severely limits its utility in practice. In this paper we introduce a new (provably) efficient hierarchical approach for optimizing \emph{both continuous and categorical} variables, using a new time-varying bandit algorithm specifically designed for the population based training regime. We evaluate our approach on the challenging Procgen benchmark, where we show that explicitly modelling dependence between data augmentation and other hyperparameters improves generalization.
\end{abstract}

\section{Introduction}
\label{sec:intro}

Reinforcement Learning (RL \cite{suttonbarto}) is a paradigm whereby agents learn to make sequential decisions through trial and error. In the past few years there have been a series of breakthroughs using RL in games \cite{alphago, dqn, dota} and robotics \cite{dexterous_openai, qt_opt}, which have led to a surge of interest in the machine learning community. Beyond these examples, RL offers the potential to impact vast swathes of society, from autonomous vehicles to robotic applications in healthcare and industry. 

Despite the promising results, RL is notoriously difficult to use in practice. In particular, RL algorithms are incredibly sensitive to hyperparameters \cite{deeprlmatters, andrychowicz2020matters, Engstrom2020Implementation, reproducibility_cont_control_islam_arxiv17, obando20revisiting}, with careful tuning often the difference between success and failure. Furthermore, as new and more complex algorithms are introduced, the search space continues to grow \cite{andrychowicz2020matters}. As a result, many lauded approaches are impossible to reproduce without the vast resources required to sweep through hundreds or thousands of possible configurations \cite{hpo_reproducible}. This makes it almost impossible to apply these methods in novel settings, where optimal hyperparameters are unknown. On the other hand, the capability of current methods may be understated, as better configurations could boost performance \cite{bo_alphago, pbtbt}.

In this work we focus on the recent impressive results for \emph{Population Based Training} (PBT, \cite{pbt, pbt2}), which has demonstrated strong performance in a variety of prominent RL settings \cite{kickstarting, pbtbt, liu2018emergent, impala, Jaderberg859}. PBT works by training agents in parallel, with an evolutionary outer loop optimizing hyperparameters, periodically replacing weaker agents with perturbations of stronger ones. However, since PBT relies on random search to explore the hyperparameter space, it requires a large population size which can be prohibitively expensive for small and even medium-sized labs. 

\begin{table*}[h]
\begin{center}
\vspace{-1mm}
\caption{\small{The components of related approaches, and their relative trade-offs.}}
\label{table:summary_of_diff}
\scalebox{0.9}{
\begin{tabular}{ cccc } 
\toprule
\textbf{Algorithm}  & \textbf{Population Based?} & \textbf{Efficient Continuous?} &  \textbf{Efficient Categorical?} \\ \midrule
PBT/PBA \cite{pbt, PBA}  & \tick  & \cross & \cross \\ 
PB2 \cite{pb2}  & \tick  & \tick & \cross \\ 
CoCaBO \cite{cocabo}  & \cross  & \tick & \tick \\ 
This work  & \tick  & \tick & \tick \\ 
\bottomrule
\end{tabular}}
\vspace{-1mm}
\end{center}
\end{table*}

In order to achieve similar success with a smaller computational budget, the recent \emph{Population Based Bandits} (PB2, \cite{pb2}) algorithm improved sample efficiency by introducing a probabilistic exploration step, backed by theoretical guarantees. Unfortunately, a key limitation of PB2 is that it only addresses the problem of efficiently selecting \emph{continuous} hyperparameters, inheriting the random search method for \emph{categorical} variables from the original PBT. This is not only inefficient but also crucially ignores potential dependence between continuous and categorical variables. Since the GP model is completely unaware of the changing categories, it is unable to differentiate between trials that may have completely different categorical hyperparameters. In this paper we introduce a new hierarchical approach to PB2, which can efficiently model \emph{both} continuous and categorical variables, with theoretical guarantees. Our main contributions are the following:

\textbf{Technical}: We introduce a new PB2 explore step which can \emph{efficiently} choose between \emph{both} categorical and continuous hyperparameters in a population based training setup. In particular, we propose a new time-varying batch multi-armed bandit algorithm, and introduce two hierarchical algorithms which condition on the selected categorical variables. We show our new approach achieves sublinear regret, extending the results for the continuous case with PB2. 

\textbf{Practical}: We scale our approach to test generalization on the Procgen benchmark \cite{procgen}, an active area of research. We demonstrate improved performance when explicitly modelling dependence between data augmentation type and continuous hyperparameters (such as learning rate) vs. baselines using random search to select the data augmentation.

\section{Related Work}
\label{sec:related} 

This work contributes to the emerging field of AutoRL \cite{autorl}, which seeks to automate elements of the reinforcement learning (RL) training procedure. Automating RL hyperparameter tuning has been studied since the 1980s \cite{barto1981goal}, with a surge in recent interest due to the increasing complexity of modern algorithms \cite{pbtbt, hoof}. The scope for AutoRL is broad, from using RL in novel real-world problems where hyperparameters are unknown \cite{runge2018learning}, to improving the performance of existing methods \cite{bo_alphago}. Recent successes in AutoRL include learning differentiable hyperparameters with meta-gradients  \cite{metagradients,stac}, while it has recently been shown it is even possible to learn algorithms \cite{learnedPG, evolving_algos}. 

In this paper we focus on the class of \emph{Population Based Training} (PBT \cite{pbt}) methods. Inspired by how humans run experiments, PBT trains agents in parallel and periodically replaces the weakest agents with variations of the stronger ones, learning a hyperparameter \emph{schedule} on the fly, in a single training run. While PBT is applicable in any AutoML \cite{automl_book} setting, it has been particularly impactful in deep RL \cite{impala,pbtbt, kickstarting}. PBT was recently shown to work well with a shared replay buffer \cite{franke2021sampleefficient}, but this only applies for off policy methods. This paper builds on the recently introduced \emph{Population Based Bandits} (PB2, \cite{pb2}) algorithm, which employs a Bayesian Optimization (BO, \cite{bayesopt_nando, bo_jmlr, practical_bo, gp_bucb, async_alvi, cocabo}) approach to select hyperparameters during the course of training. Concretely PB2 casts the hyperparameter selection step of PBT as a batch time-varying GP bandit optimization problem \cite{bogunovic2016time}. The strength of PB2 lies in finding optimal configurations with a small population size, yet at present it has only been developed for continuous hyperparameters, relying on random search for the categorical variables.

Categorical variables are prominent in RL, for example the choice of exploration strategy \cite{exploration_benchmark}, or even algorithm class. A prominent recent categorical variable is the choice of data augmentation, which has been shown to significantly improve efficiency and generalization in RL \cite{rad, drq, curl, mixreg}, reducing observational overfitting \cite{Song2020Observational}. Recent work introduced \emph{automatic data augmentation} \cite{ucb_drac}, which shows significant improvement over a static baseline by learning the data augmentation on the fly (with a single agent). However, this approach relies on a grid search over new hyperparameters, keeping all others fixed. We take inspiration from this result and learn \emph{both} data augmentation and baseline RL algorithm hyperparameters jointly on the fly, with a population of agents.  PBT has also been extended to data augmentation (PBA, \cite{PBA}) with strong results in supervised learning tasks. However, in this setting the data augmentation was set up as a continuous variable. A high level summary of differences vs. prior work is shown in Table \ref{table:summary_of_diff}.


\section{The Case for AutoRL}
\label{sec:background}

In this section we introduce the reinforcement learning (RL, \cite{suttonbarto}) paradigm, before making the case for automating hyperparameter selection for policy gradient algorithms.

\subsection{Reinforcement Learning Background}

A Markov Decision Process (MDP) is a tuple $(\mathcal{S}, \mathcal{A}, P, R, \gamma)$, where for each time step $t=0, 1, 2\ldots$ the environment provides the agent with an observation $s_t \in \mathcal{S}$, the agent responds by selecting an action $a_t \in \mathcal{S}$, and then the environment provides the next reward $r_{t}$, discount $\gamma_{t+1}$, and state $s_{t+1}$. Reinforcement learning (RL) considers the problem of learning a policy $\pi$ which selects actions that maximize the expected total discounted reward \cite{Bellman:DynamicProgramming,szepesvari2010algorithms,suttonbarto}. 

We note that our method can be applied to \textbf{any} RL algorithm.\footnote{In fact, PBT-style hyperparameter tuning can be applied to other machine learning paradigms such as supervised learning, but it has shown to be especially competitive in RL.} However, for simplicity we focus our discussion on \emph{policy gradient} algorithms, which have been widely studied in the community. Policy gradient algorithms directly seek to maximize a policy  $\pi:\mathcal{S}\rightarrow\mathcal{A}$ parameterized by $\theta$ with respect to expected reward, with an objective as follows:
\begin{align*}
    J(\pi_\theta) = \mathbb{E}_{\tau \sim \pi_\theta}R(\tau)
\end{align*}
where $\tau = \{s_1, a_1, r_1, \dots, s_H, a_H, r_H\}$ for some horizon $H$. Of course, we wish to maximize this, so optimize the policy by taking the following gradient steps:
\begin{align*}
    \theta_{t+1} = \theta_t + \alpha \nabla_\theta J(\pi_{\theta_t})
\end{align*}

\subsection{Case study: Proximal Policy Optimization}

Proximal Policy Optimization (PPO \cite{schulman2017proximal}) is one of the most widely used RL algorithms, achieving strong performance in continuous control problems, and even scaling to large scale games \cite{dota}. Building on Trust Region Policy Optimization (TRPO \cite{schulman2015trust}), the success of PPO comes from using a clipped loss function as follows:
\begin{align*}
     \mathcal{L}_{\mathrm{PPO}}(\theta) = \mathrm{min} \Bigg( \frac{\pi_\theta(a|s)}{\pi_\mu(a|s)} A^{\pi_\mu}, g(\theta, \mu) A^{\pi_\mu}  \Bigg), \; \text{where} \; 
     g(\theta, \mu) = \mathrm{clip} \bigg( \frac{\pi_\theta(a|s)}{\pi_{\mu}(a|s)}, 1-\epsilon, 1+\epsilon \bigg)
\end{align*}
for a previous policy $\pi_\mu$, an advantage function $A$ and a clipping hyperparameter $\epsilon$. As can be seen, PPO's success hinges on several new hyperparameters. As well as the learning rate $\alpha$, clip parameter $\epsilon$, decay $\gamma$, there is also often another hyperparameter in the advantage function $\lambda$ \cite{gae}. As if this wasn't enough, a recent study showed we also need to consider \emph{code level} hyperparameters \cite{Engstrom2020Implementation} after recent work found nine implementation details (such as reward scaling) had a significant impact on performance. Thus, it is clear that while PPO can be effective, it requires careful tuning if it is to be used effectively for novel problems.

In addition, a recent study investigating actor-critic algorithms \cite{andrychowicz2020matters} found that not only are there many more hyperparameters than previously thought (for example weight initialization strategy), but their optimal values are dependent on one another, producing a combinatorially large search space. This problem is only increasing as novel methods extending upon PPO they almost always add \emph{additional} hyperparameters \cite{ucb_drac}. Thus, if it wasn't enough that PPO itself has tens of hyperparameters, new additions may render all previous values sub-optimal since they may be dependent on the new variables. The main hypothesis in this paper is that we can improve performance of baseline algorithms by \emph{jointly} learning several hyperparameters \emph{on the fly}.

\section{Population Based Bandits}

We consider the problem of optimizing a population of agents in parallel, dynamically adapting their weights and hyperparameters such that strong performance can be achieved in a single training run. Following existing work \cite{pbt, pb2}, we consider two sub-routines, $\mathrm{explore}$ and $\mathrm{exploit}$. We train for a total of $T$ steps, evaluating performance every $t_{\mathrm{ready}} < T$ steps. For the $\mathrm{exploit}$ step, the weights of the bottom agents are replaced by those from a randomly sampled agent from the set of best performing agents, in what is called \emph{truncation selection}. The next step is to select new hyperparameters, with the  $\mathrm{explore}$ step. The full procedure is shown in Algorithm \ref{Alg:pb2} and Fig. \ref{figure:pbt}.

\scalebox{0.95}{
\begin{minipage}{.99\linewidth}
    \begin{algorithm}[H]
    \textbf{Initialize:} Population network weights, hyperparameters, \textcolor{blue}{empty dataset}. \\
    \textbf{for (in parallel)} $t=1, \ldots, T-1$ \textbf{do} \\
       1. \textbf{Train Models} \\
      2. \textbf{Evaluate Models \textcolor{blue}{and Record Data}}  \\
      3. If $t \mod t_{\mathrm{ready}} = 0$: 
      \begin{itemize}
          \item \textbf{Exploit:} Replace the weights of weaker agents with those from stronger agents.
          \item \textbf{Explore:} If weights were replaced, select new hyperparameters.
      \end{itemize}
     \textbf{Return the best trained model $\theta$}
     \caption{Population Based Training (\textcolor{blue}{Bandits})}
    \label{Alg:pb2}
    \end{algorithm}
\end{minipage}
}

The key difference between Population Based Training (PBT, \cite{pbt}) and Population Based Bandits (PB2, \cite{pb2}) is the $\mathrm{explore}$ step. PBT selects new hyperameters using random perturbations, hence it often performs poorly in a resource constrained setting. By contrast PB2 (\textcolor{blue}{blue} in Alg. \ref{Alg:pb2}) models the data from previous trials to efficiently explore the hyperparameter space. We focus on the PB2 framework and introduce new algorithms for mixed-input hyperparameters.

\subsection{Online Hyperparameter Selection as GP-Bandit Optimization}\label{subsec:GPbanditopt}

\begin{wrapfigure}{R}{0.50\textwidth}
    \centering
    \vspace{-7mm}
    \centering\subfigure{\includegraphics[width=.9\linewidth]{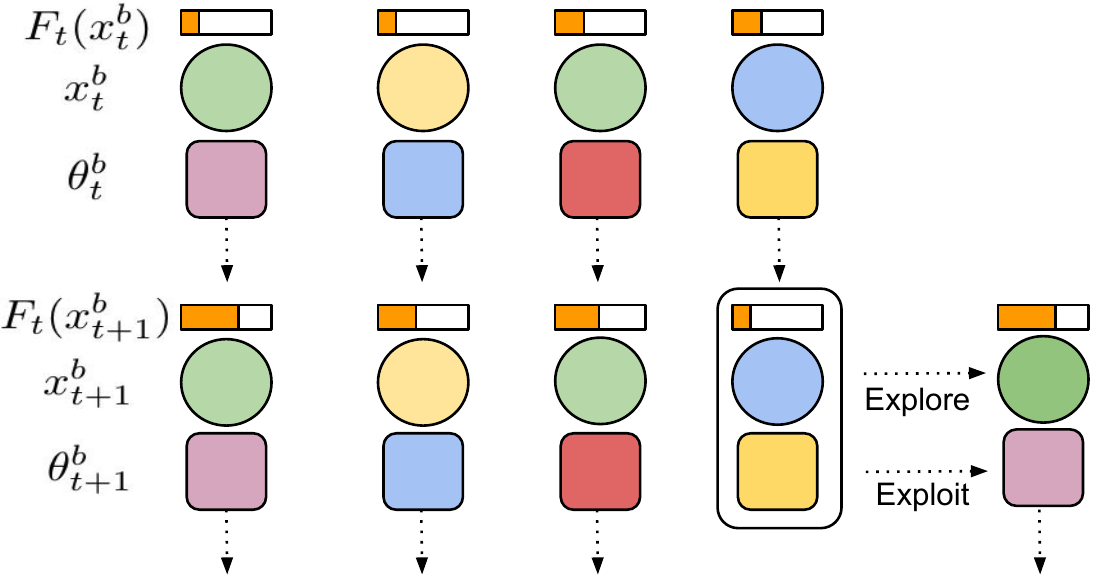}}
    \vspace{-4mm}
    \caption{\small{An overview of the PBT/PB2 framework. We have a population of $B$ agents, each with weights $\theta_t^b$ and hyperparameters $x_t^b$. $t$ refers to the stage of training, with each increment corresponding to training for $t_\mathrm{ready}$ steps. The performance of the agent is represented by $F_t$. At each $t$, the weights of the worst performing agents are replaced ($\mathrm{exploit}$), where weights are copied from better agents. The new hyperparameters are then selected using an $\mathrm{explore}$ procedure, the focus of this work.}}
    \vspace{-5mm}
    \label{figure:pbt}
\end{wrapfigure}

We consider the problem of selecting optimal hyperparameters, $x_t^b$, from a compact, convex subset $\mathcal{D}\in \mathbb{R}^d$ where $d$ is the number of hyperparameters. Here the index $b$ refers to the $b$th agent in a population, and the subscript $t$ represents the number of timesteps elapsed during the training of a neural network. In particular, we consider the \textit{schedule} of optimal hyperparameters over time $\left( x_t^b \right)_{t=1,...T}$. Let $F_t(x_t)$ be an objective function under a given set of hyperparameters at timestep $t$. Here we take $F_t(x_t)$ to be the reward for a deep RL agent. Our goal is to maximize the final performance $F_T(x_T)$. 

Following \cite{pb2} we formulate this problem as optimizing the time-varying black-box reward function $f_t$, over $\mathcal{D}$. Every $t_{\mathrm{ready}}$ steps, we observe and record noisy observations, $y_t = f_t(x_t) + \epsilon_t$, where $\epsilon_t \sim \mathcal{N}(0, \sigma^2\mathbf{I})$ for some fixed $\sigma^2$. Note the function $f_t$ represents the \emph{change} in $F_t$ after training for $t_{\mathrm{ready}}$ steps, i.e. $F_t - F_{t-t_\mathrm{ready}}$. We define the best choice at each timestep as $x_t^* = \arg\max_{x_t \in \mathcal{D}} f_t(x_t)$, and so the \textbf{regret} of each decision as $r_t = f_t(x_t^*) - f_t(x_t)$. Minimizing the regret of each decision is equivalent to maximizing the final reward \cite{pb2}, i.e. $\max F_T(x_T)= \min \sum^T_{t=1} r_t(x_t)$. We now discuss how PB2 seeks to minimize regret for \emph{continuous} hyperparameters.

\subsection{Parallel Gaussian Process Bandits for a Time-Varying Function}\label{sec:TVGP}

PB2 models the data using a Gaussian Process (GP, \cite{RasmussenGP}), thus assumes the data follows a Gaussian distribution with mean $\mu_t(x')$ and variance $\sigma_t^2(x')$ as: 
\begin{equation}
\label{eqn:mu}
    \mu_t(x') \coloneqq \textbf{k}_t(x')^T(\textbf{K}_t + \sigma^2\textbf{I})^{-1}\mathbf{y}_t
\end{equation}
\begin{equation}
\label{eqn:sig}
    \sigma_t^2(x') \coloneqq k(x', x') - \textbf{k}_t(x')^T(\textbf{K}_t +\sigma^2\textbf{I})^{-1}\textbf{k}_t(x'),
\end{equation}
where $\mathbf{K}_t \coloneqq \{k(x_i, x_j)\}_{i, j=1}^t$ and $\mathbf{k}_t \coloneqq \{k(x_i, x'_t)\}_{i=1}^t$. The GP predictive mean and variance above will later be used to represent the exploration-exploitation trade-off in making decisions in the presence of uncertainty. The key insight in PB2 is that it is possible to consider the time-varying nature of neural network hyperparameters. This is consistent with empirical results, for example, the use of learning rate schedules in almost all large scale neural network models \cite{alexnet}. PB2 follows \cite{bogunovic2016time} by modeling the reward function under the time-varying setting as follows:
\begin{equation*}
     f_1(x) = g_1(x), \; \; \; \;
     f_{t+1}(x) = \sqrt{1-\omega}f_t(x) + \sqrt{\omega}g_{t+1}(x) \; \: \: \:  \forall t \geq 2,
\end{equation*}
where $g_1, g_2, ...$ are independent random functions with $g\sim GP(0, k)$ and $\omega \in[0,1]$ models how the function varies with time, such that if $\omega=0$ we return to GP-UCB and if $\omega=1$ then each evaluation is completely independent. This leads to the extensions of Eqs. (\ref{eqn:mu}) and (\ref{eqn:sig}) 
using the new covariance matrix $\Tilde{\textbf{K}}_t = \textbf{K}_t \circ \textbf{K}_t^{\mathrm{time}}$ where $\textbf{K}_t^{\mathrm{time}} = [(1-\omega)^{|i-j|/2}]_{i,j=1}^T$ and $\Tilde{\textbf{k}}_t(x) = \textbf{k}_t \circ \textbf{k}_t^{time}$ with $\textbf{k}_t^{time} = [(1-\omega)^{(T+1-i)/2}]_{i=1}^T$. Here $\circ$ refers to the Hadamard product. 

\textbf{Selecting hyperparameters for parallel agents.} A key observation in \cite{gp_bucb} is that since a GP's variance (Eqn. \ref{eqn:sig}) does not depend on $y_t$, the acquisition function can  account for incomplete trials by updating the uncertainty at the pending evaluation points. Recall $x^b_{t}$ is the $b$-th point selected in a batch, after $t$ timesteps. This point may draw on information from $t + (b-1)$ previously selected points. In the single agent, sequential case, we set $B=1$ and recover $t,b = t-1$. Thus, at the iteration $t$, we find a next batch of $B$ samples $\left[x^1_t,x^2_t,...x^B_t \right]$  by sequentially maximizing the following acquisition function:
\begin{equation}
\label{eq:gp_bucb_acq}
    x^b_t = \arg \max_{x\in \mathcal{D}} \mu_{{t,1}}(x) + \sqrt{\beta_t} \sigma_{{t,b}}(x),\forall b=1,...B
\end{equation}
for $\beta_t >0$. In Eqn. (\ref{eq:gp_bucb_acq}) we have the mean from the previous batch ($\mu_{t,1}(x)$) which is fixed, but can update the uncertainty using our knowledge of the agents currently training ($\sigma_{t,b}(x)$). This significantly reduces redundancy, as the model is able to explore distinct regions of the space. 

To summarize, the PB2 explore step works by optimizing Eqn. (\ref{eq:gp_bucb_acq}) to select a batch of new parameters. However, it is clear to see that this approach is not equipped with a means to select \emph{categorical} variables. In fact, the existing PB2 algorithm inherits the random selection of categories from PBT. To reflect this, from this point onwards we refer to PB2 from \cite{pb2} as \textbf{PB2-Rand}. 

\section{Efficient Selection of Continuous and Categorical Variables}
\label{sec:method}
We propose a new class of hierarchical population based approaches for handling continuous and categorical variables. To the best of our knowledge, this is the first provably efficient approach for optimizing the categorical variables in the PBT family \cite{pbt,pb2}. First we introduce a new time-varying version of the parallel EXP3 algorithm, which we call $\mathrm{TV.EXP3.M}$, used in all of our methods to select categories. We then present three alternative approaches to subsequently select the continuous variables in a hierarchical fashion, varying in their degree of dependence on the categorical choices.

\subsection{Time-varying Parallel EXP3}

\begin{algorithm}
\caption{TV.EXP3.M brief\label{alg:EXP3_MultiplePlay_S_short}}
\label{alg:tv_exp3_brief}

\begin{algor}
\item [{{*}}] Input: $\gamma=\sqrt{\frac{C\ln(C/B)}{(e-1)BT}}$, $\alpha=\frac{1}{T}$, $C$ \#categorical
choice, $T$ \#max iteration, $B$ \#multiple play
\end{algor}
\begin{algor}[1]
\item [{{*}}] Init $w_{c}=1,\forall c=1...C$ and denote $\eta=(\frac{1}{B}-\frac{\gamma}{C})\frac{1}{1-\gamma}$
\item [{for}] $t=1$ to $T$
\item [{{*}}] Normalize $w$ to prevent from over exploitation
\item [{{*}}] Compute the prob for each arm $p_t^c, \forall c$ 
\item [{{*}}] Select a batch $S_{t}=\textrm{DepRound}\left(B,\left[p_{t}^{1}p_{t}^{2}....p_{t}^{C}\right]\right)$
\item [{{*}}] Observe the reward after evaluating $S_t$
\item [{{*}}] Update the weights using $\eta$ for each arm $w_t^c, \forall c$ 
\item [{endfor}]~
\end{algor}
\begin{algor}
\item [{{*}}] Output: $\mathcal{D}_{T}$
\end{algor}
\end{algorithm} 

We introduce TV.EXP3.M, a new algorithm for parallel multi-armed bandits (MAB) in a  time-varying setting with adversarial feedback. TV.EXP3.M is an extension of the multiple play EXP3 algorithm (or EXP3.M) \cite{uchiya2010algorithms} for time-varying rewards. At each round, TV.EXP3.M takes multiple actions from the set of all possible choices \cite{uchiya2010algorithms}, making it possible to use this approach with a population of agents. See Algorithm \ref{alg:EXP3_MultiplePlay_S_short} for a high level summary and Algorithm \ref{alg:EXP3_MultiplePlay_S} and Sec. \ref{appendix_subsec:tv_exp3_m} for further details.

We treat each categorical choice as an arm in the bandit setting. Given a collection of $C$ categories, each includes a reward probability, we select a batch of $B$ points
\begin{equation}
\label{eq:batch_selection_TVEXP3.M}
    A_t = [A_t^1, A_t^2,...A_t^B] = \mathrm{TV.EXP3.M}(D_{t-1}).
\end{equation}
To efficiently select a set of $B$ distinct arms from $[C]$, we use the technique of dependent rounding (DepRound) while satisfying the condition that each arm $c$ is selected with probability $p_c$ exactly
\cite{gandhi2006dependent}. TV.EXP3.M is provably efficient, with a sublinear regret bound.

\begin{theorem}
\label{thm_regret_TVEXP3M}
Set $\alpha=\frac{1}{T}$ and $\gamma=\min\left\{ 1,\sqrt{\frac{C\ln(C/B)}{(e-1)BT}}\right\}$, we assume the reward distributions changes at arbitrary instances, but the total number of change points is no more than $V \ll \sqrt{T}$ times. The expected regret of TV.EXP3.M  satisfies the following sublinear bound
\begin{align*}
\mathbb{E} \left[ R_{TB} \right]\le & \left[1+e+V\right] \sqrt{(e-1)\frac{CT}{B}\ln\frac{CT}{B}}.
\end{align*}
\vspace{-6mm}

\end{theorem}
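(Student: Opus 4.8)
The plan is to adapt the potential-function analysis of the multiple-play EXP3.M algorithm \cite{uchiya2010algorithms} to a switching (time-varying) comparator, following the EXP3.S template. The argument is organized around the total weight $W_t = \sum_{c=1}^{C} w_t^c$, and the role of the normalization step with parameter $\alpha = 1/T$ is precisely to keep every weight from collapsing so the algorithm can re-track the best arm-set after each change point. I would first record that, since DepRound \cite{gandhi2006dependent} selects a batch $S_t$ of $B$ distinct arms while preserving each marginal inclusion probability $p_t^c$ with $\sum_{c} p_t^c = B$, the importance-weighted estimates $\hat{x}_t^c = \mathbbm{1}[c \in S_t]\, x_t^c / p_t^c$ satisfy $\mathbb{E}[\hat{x}_t^c \mid \mathcal{D}_{t-1}] = x_t^c$; this unbiasedness is what lets us pass from realized rewards to their estimates when taking expectations at the end.

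Next I would derive the per-round weight recursion. Using $\exp(z) \le 1 + z + (e-2)z^2$ for $0 \le z \le 1$ (which forces $\gamma$ to be small enough that the exponent stays in $[0,1]$), I would upper-bound $\ln(W_{t+1}/W_t)$ and telescope over $t$. This yields an upper bound on $\ln(W_{T+1}/W_1)$ in terms of the algorithm's cumulative estimated reward plus a quadratic exploration term scaling like $(e-1)\gamma$ times the total reward, which is the origin of the $(e-1)$ factor and one of the $\gamma$-dependent pieces.

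The main step, and the main obstacle, is the matching lower bound on $\ln(W_{T+1}/W_1)$ against a comparator that partitions $[1,T]$ into at most $V+1$ segments and plays a fixed best arm-set on each. On a single segment this reduces to the static EXP3.M argument; the difficulty is crossing a change point, where the comparator's new arm-set must still carry non-negligible weight. Here the normalization with $\alpha = 1/T$ guarantees a uniform floor $w_t^c \ge (\alpha/C)\, W_t$, so each of the $V$ switches costs only an additive $\ln(C/\alpha) = \ln(CT)$-type penalty rather than an unbounded one. Summing the per-segment contributions and the $V$ switching penalties produces the $\ln(CT/B)$ inside the root and the linear-in-$V$ dependence outside.

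Finally I would combine the two bounds, take expectations using the unbiasedness above, and substitute $\alpha = 1/T$ and $\gamma = \min\{1, \sqrt{C\ln(C/B)/((e-1)BT)}\}$. Balancing the $1/\gamma$ (base and switching) terms against the $(e-1)\gamma$ (exploration) term at this $\gamma$ collapses everything into the common factor $\sqrt{(e-1)(CT/B)\ln(CT/B)}$, with the base regret, the exploration term, and the $V$ switches assembling into the coefficient $[1+e+V]$. The delicate part throughout is the constant bookkeeping: keeping the multiple-play scaling $C/B$ consistent across the importance weights, the exploration term, and the switching penalties so that they combine into exactly this coefficient rather than a looser one.
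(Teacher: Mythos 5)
Your proposal follows essentially the same route as the paper's proof: the potential function $W_t=\sum_c w_t^c$ with the inequality $e^z\le 1+z+(e-2)z^2$ for the upper bound, the segment decomposition into at most $V$ change points with the $\alpha$-induced weight floor $w_t^c\ge(\alpha/C)W_t$ paying an additive $\ln(CT/B)$-type cost per switch, DepRound unbiasedness to pass to expectations, and the final balancing at the stated $\gamma$. The only (cosmetic) difference is that the paper carries out the upper/lower bound comparison segment by segment before summing over $v$, and makes explicit the AM--GM step $\sum_{j\in A_v^*}w_j\ge B\bigl(\prod_{j\in A_v^*}w_j\bigr)^{1/B}$ for the size-$B$ comparator set, which you subsume under ``the static EXP3.M argument.''
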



\subsection{New Exploration Strategies}
Now we are ready to present two novel hierarchical exploration strategies. Both methods use TV.EXP3.M to select categories before subsequently choosing continuous variables. 

\begin{figure*}[h]
\begin{minipage}{0.56\textwidth}
\begin{algorithm}[H]
\small
\centering
\caption{PB2-Mult: explore}
\label{pb2mult}
\begin{algorithmic}
  \STATE{Select $\{h^b_t\}^B_{b=1}$ using TV.EXP3.M}
  \STATE{Filter dataset}
  \STATE{Select $x_t^b$ by optimizing Eqn. (\ref{eq:gp_bucb_acq})}
  \STATE{\textbf{Output:} $\{h^b_t\}^B_{b=1}$, $\{x^b_t\}^B_{b=1}$}.
\end{algorithmic}
\end{algorithm}
\end{minipage}
\hfill
\begin{minipage}{0.4\textwidth}
\begin{algorithm}[H]
\centering
\small
\caption{PB2-Mix: explore}
\label{pb2mix}
\begin{algorithmic}
\STATE Select $\{h^b_t\}^B_{b=1}$ using TV.EXP3.M
\STATE Select $x_t^b$ by optimizing Eqn. (\ref{eq:gp_bucb_acq}), using the kernel from Eqn. (\ref{eq:TV_Cocabo_kernel})
\STATE  \textbf{Output:} $\{h^b_t\}^B_{b=1}$, $\{x^b_t\}^B_{b=1}$
\end{algorithmic}
\end{algorithm}
\end{minipage}
\vspace{-1pt}
\end{figure*}

\textbf{PB2-Mult (Algorithm \ref{pb2mult}):} We consider a special version of dependency in which the presence of the continuous variables entirely depends on if the corresponding category is switched on. This not only allows us to fully incorporate dependence between hyperparameters, but also allows us to have different hyperparameters depending on the category. For example, if our categorical variable is the choice of neural network optimizer, then the current optimal learning rate may vary significantly if using Adam \cite{adam} or SGD. Furthermore, Adam requires additional hyperparameters, $\beta_1$ and $\beta_2$. We handle this dependency by constructing \emph{multiple} time-varying GP surrogate models, each corresponding to a choice of category. We call this method \textbf{PB2-Mult}. PB2-Mult is the best choice if the continuous variables are highly dependent on the category, or if there are differing numbers of continuous variables for each category. 

\textbf{PB2-Mix (Algorithm \ref{pb2mix}):} We also propose a mechanism to incorporate dependence between continuous and categorical variables \emph{directly into the GP kernel}, in what we call \textbf{PB2-Mix}. To do this, we extend the joint continuous-categorical kernel in CoCaBO \cite{cocabo} to the time-varying setting. Denote $z=[x,h]$, we have:
\vspace{-3mm}
\begin{align} \label{eq:TV_Cocabo_kernel}
k_{z}(z,z') & =(1-\lambda)\left( \Tilde{\textbf{k}}_t(x)+k_{ht}\right)+\lambda \Tilde{\textbf{k}}_t(x) k_{ht}
\end{align}
where $k_{ht}=\frac{\sigma_{2}}{C}\sum\mutinfo(h,h') \times \textbf{k}_t^{time}$ and 
$\Tilde{\textbf{k}}_t(x)$, $\textbf{k}_t^{time}$ are defined in Sec. \ref{sec:TVGP}. In the above formulation, we first select a batch of $B$ categorical choices $h^b_t,  \forall b\le B$ using TV.EXP3.M. Then, we utilize the kernel in Eqn. (\ref{eq:TV_Cocabo_kernel}) to learn the time-varying GP model to select the next batch of continuous points $x_t^b,\forall b\le B$ following Eqn. (\ref{eq:gp_bucb_acq}), as in PB2-Rand. Each agent $b$ will then train with $[x^b_t, h^b_t]$. We do not need to set the parameter $\lambda$ as it can be optimized alongside the other kernel parameters (see the Appendix, Sec: \ref{sec:kernel_gradients}). Thus, PB2-Mix in principle achieves the best of both worlds: it is able to learn the relative importance of the continuous and categorical kernels, while including all of the data. 

\subsection{Main theoretical results}

We extend the sublinear regret bound from PB2-Rand to PB2-Mult, providing performance guarantees. Under the categorical-continuous setting, we show the equivalence that minimizing the cumulative regret $R_{T}$  is equivalent to maximizing the reward, presented in Lem. \ref{lem:maxreward_minregret} in the appendix. Our main theoretical result is to derive the regret bound for PB2-Mult.  

\begin{theorem} \label{thm:PB2-D}
The PB2-Mult algorithm has cumulative regret bounded with high probability
\vspace{-1mm}
\begin{small}
\begin{align*}
\mathbb{E}\left[R_{TB} \right] & \lesssim \mathcal{O} \left(V \sqrt{\frac{CT}{B} \ln T } + \sqrt{ \frac{T^2_{c^*_t}}{\tilde{N}B}  ( \gamma_{\tilde{N}B} +\left[\tilde{N}B\right]^{3}\omega )} \right)  
\end{align*}
\end{small}
\vspace{-1mm}
for $\tilde{N} \rightarrow  T_{c^*_t}$, $\omega \rightarrow 0$, and $V \ll \sqrt{T}$.
\end{theorem}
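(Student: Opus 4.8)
The plan is to bound the cumulative regret of PB2-Mult by decomposing each round's regret into a \emph{categorical} part, controlled by TV.EXP3.M via Theorem \ref{thm_regret_TVEXP3M}, and a \emph{continuous} part, controlled by the batched time-varying GP analysis applied separately within each category. Write $f_t(c,x)$ for the time-varying reward of running category $c$ with continuous hyperparameters $x$, let $(c_t^*, x_t^*) = \arg\max_{c,x} f_t(c,x)$ be the joint optimum at round $t$, and let $(c_t, x_t)$ be the pair chosen by the algorithm. For the selected category $c_t$ let $x_t^{*,c_t} = \arg\max_x f_t(c_t, x)$ be the best continuous choice \emph{within} that category. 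Then the per-round regret splits telescopically as
\begin{align*}
r_t &= f_t(c_t^*, x_t^*) - f_t(c_t, x_t) \\
&= \underbrace{\big[f_t(c_t^*, x_t^*) - f_t(c_t, x_t^{*,c_t})\big]}_{\text{categorical}} + \underbrace{\big[f_t(c_t, x_t^{*,c_t}) - f_t(c_t, x_t)\big]}_{\text{continuous}}.
\end{align*}
Summing over the $T$ rounds and $B$ parallel plays gives $R_{TB} = R_{TB}^{\mathrm{cat}} + R_{TB}^{\mathrm{cont}}$, and I bound each term separately.

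For $R_{TB}^{\mathrm{cat}}$ I would identify the induced reward of each arm $c$ with its best-in-category value $\max_x f_t(c,x)$, so that the categorical subproblem is exactly a time-varying, multiple-play adversarial bandit on $C$ arms with batch size $B$ and at most $V$ change points. Applying Theorem \ref{thm_regret_TVEXP3M} then yields $\mathbb{E}[R_{TB}^{\mathrm{cat}}] \lesssim V\sqrt{(CT/B)\ln(CT/B)} = \mathcal{O}\big(V\sqrt{(CT/B)\ln T}\big)$, the first summand of the claimed bound. The point to verify here is that feeding TV.EXP3.M the GP-estimated continuous reward rather than the true $\max_x f_t$ does not change the bound by more than a lower-order term; this holds because the resulting gap is precisely $R_{TB}^{\mathrm{cont}}$, which is accounted for separately.

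For $R_{TB}^{\mathrm{cont}}$ I would observe that PB2-Mult maintains an independent time-varying GP per category, each updated only on the rounds in which that category is played, so that the acquisition rule (\ref{eq:gp_bucb_acq}) with the standard high-probability GP confidence bound makes this a batched TV-GP-UCB instance. Summing within-category continuous regret over all categories, the dominant contribution comes from the optimal category $c^*$, which is played $T_{c^*_t}$ times and hence accumulates $\tilde{N}B$ observations with $\tilde{N} \to T_{c^*_t}$; suboptimal categories are played only $o(T)$ times, so their continuous regret is absorbed into $R_{TB}^{\mathrm{cat}}$. Invoking the batched time-varying regret bound of \cite{bogunovic2016time, pb2}, where $\gamma_{\tilde{N}B}$ is the maximum information gain and $[\tilde{N}B]^3\omega$ is the forgetting/drift penalty of the time-varying kernel, gives $\mathbb{E}[R_{TB}^{\mathrm{cont}}] \lesssim \sqrt{\frac{T^2_{c^*_t}}{\tilde{N}B}(\gamma_{\tilde{N}B} + [\tilde{N}B]^3\omega)}$ in the stated regime $\tilde{N} \to T_{c^*_t}$, $\omega \to 0$. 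Combining the two bounds by the triangle inequality yields the theorem.

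The hard part will be the continuous term, specifically transplanting the TV-GP-UCB analysis of \cite{bogunovic2016time} to a GP that receives observations only at the \emph{non-contiguous} global time indices at which its category is selected. Since the forgetting factor $(1-\omega)^{|i-j|/2}$ is indexed by global time, the gaps between a category's plays matter: I would need to verify that the information-gain bound and the drift term survive sparse indexing, controlling the error through the $\omega \to 0$ and $\tilde{N} \to T_{c^*_t}$ limits, and to take a union bound over the $C$ per-category confidence sequences so the whole statement holds with high probability. A secondary subtlety is keeping the decomposition airtight when the identity of $c_t^*$ itself changes across the $V$ change points, so that the best-in-category relabelling used for the categorical term stays consistent with the per-category GPs used for the continuous term.
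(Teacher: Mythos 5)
Your overall architecture matches the paper's: split the regret into a categorical piece handled by Theorem \ref{thm_regret_TVEXP3M} and a continuous piece handled by PB2's batched time-varying GP bound, then add the two. But the decomposition point you chose differs from the paper's in a way that creates a concrete gap. The paper adds and subtracts the \emph{realized} reward of the optimal arm, $\max_b \sum_t f_{c^*_{t,b}}(x_{t,b})$, so its categorical term is exactly the quantity TV.EXP3.M controls (regret on the rewards the bandit actually observes, with $x_t$ treated as adversarial), and its continuous term is the GP regret \emph{within the optimal category only}, which is why only $T_{c^*_t}$ appears in the final bound. You instead add and subtract the best-in-category value $f_t(c_t, x_t^{*,c_t})$ of the \emph{selected} arm. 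With that choice, your $R_{TB}^{\mathrm{cont}} = \sum_t \bigl[f_t(c_t, x_t^{*,c_t}) - f_t(c_t,x_t)\bigr]$ sums the GP's continuous regret over \emph{every} category that gets played, not just $c^*$; bounding it category-by-category and summing gives something like $\sum_c \sqrt{T_c \gamma_{T_c B}\cdots}$, which is not the single-$T_{c^*_t}$ term in the theorem. Your claim that the suboptimal categories' continuous regret is ``absorbed into $R_{TB}^{\mathrm{cat}}$'' is not justified within your own decomposition: $R^{\mathrm{cat}}$ and $R^{\mathrm{cont}}$ are disjoint summands there, so nothing in one absorbs the other. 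The statement can be repaired, because relating your best-in-category comparator to the realized-reward comparator gives
\begin{align*}
R_{TB}^{\mathrm{cat}} \;\le\; R^{\textsc{exp3}}_{TB} \;+\; R^{\mathrm{cont}}_{c^*} \;-\; R_{TB}^{\mathrm{cont}},
\end{align*}
so the selected-category continuous regret cancels and you land exactly on the paper's two terms --- but this cancellation is the missing step, and the cleanest fix is simply to decompose at the realized reward of $c^*_t$ as the paper does.

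Two further remarks. First, your categorical bound also needs the observation the paper makes explicitly: TV.EXP3.M is fed the realized rewards $f_t(c, x_t^{\mathrm{GP},c})$, and Theorem \ref{thm_regret_TVEXP3M} is an adversarial bound, so it applies with the GP playing the role of the adversary; your framing via $\max_x f_t(c,x)$ obscures this. Second, to your credit, the difficulties you flag at the end --- the per-category GP receiving observations at non-contiguous global time indices under the $(1-\omega)^{|i-j|/2}$ forgetting kernel, the union bound over $C$ confidence sequences, and the identity of $c^*_t$ shifting across the $V$ change points --- are real and are not addressed by the paper either, which instead inserts the blanket assumption that ``the best arm can be identified by \textsc{tv.exp3.m}'' and applies PB2's Theorem 2 with $T$ replaced by $T_{c^*_t}$ without further justification. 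So your proposal is more candid about the technical debt, but it must first close the suboptimal-category gap above to reach the stated bound.
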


From Theorem \ref{thm_regret_TVEXP3M}, we have $T_{c^*_t} \rightarrow \infty$ when $T \rightarrow \infty$. Asymptotically PB2-Mult achieves sublinear convergence rate, i.e., $\lim_{T\rightarrow \infty} \frac{\mathbb{E}\left[  R_{TB} \right]}{T}=0$. We refer to Sec. \ref{appendix_subsec:Theory_PB2_D} in the Appendix for proofs.


\section{Experiments}
\label{sec:experiments}

The primary goal of our experiments is to test the efficacy of PB2-Mult and PB2-Mix against PB2-Rand, the primary baseline. We consider two settings: first, a synthetic task where we know that the continuous and categorical variables exhibit strong dependence. We then move to the AutoRL setting, where we apply them to select both continuous hyperparameters and categorical data augmentation type for challenging pixel-based procedurally generated environments \cite{ucb_drac}.

\subsection{Synthetic Function with Dependency}

We begin with a simple synthetic function with one continuous and one categorical variable. The categorical variable has two options $h_t^b \sim \{\mathrm{sin}, \mathrm{cos}\}$ and the continuous variable has bounds $x_t^b \sim [0, \frac{\pi}{2}]$. The blackbox function $f: [x,h]\rightarrow \mathbb{R}$ returns $h(x)$, and the regret is equal to $1-f_t(.)$. It is clear to see the function is maximized (and regret minimized) at $\{\mathrm{sin}, \frac{\pi}{2}\}$ and $\{\mathrm{cos}, 0\}$. We compare PB2-Mult and PB2-Mix against PB2-Rand, PBT and Random Search. At each iteration, the worst performing agent(s) is replaced by the best and the categories and continuous variables are re-selected.  We run the example $20$ times, with a population size of $4$, $8$ or $12$ agents. 

\begin{wrapfigure}{R}{0.55\textwidth}
    \centering
    \vspace{-7mm}
    \centering\subfigure{\includegraphics[width=.99\linewidth]{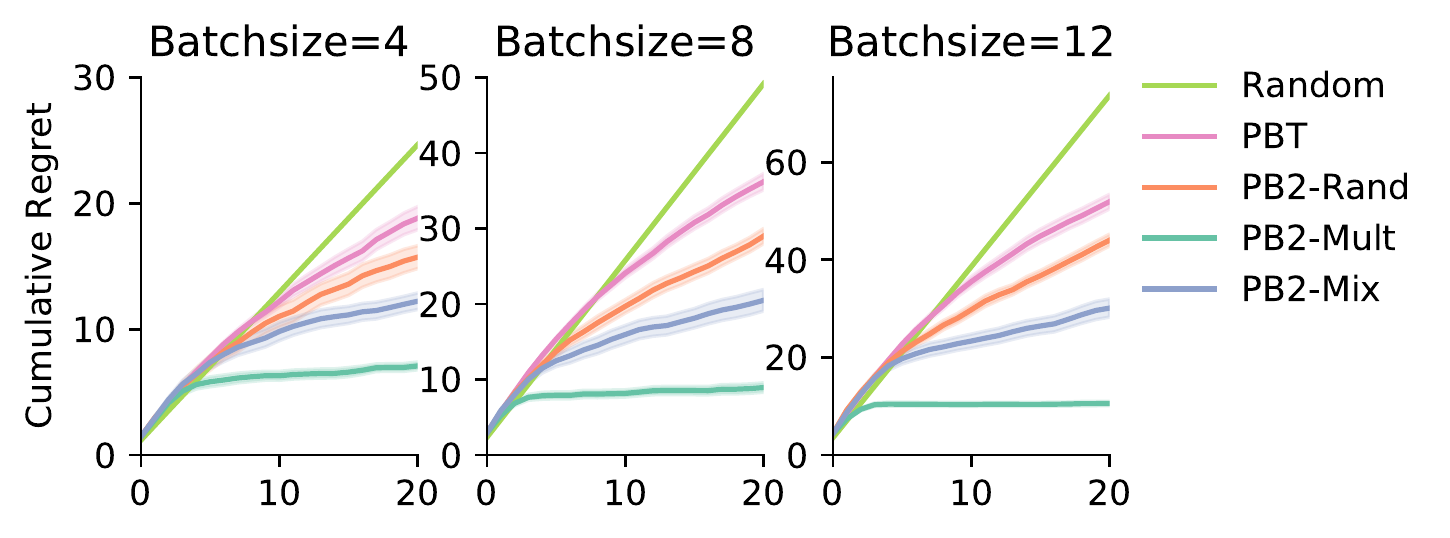}} 
    \vspace{-7mm}
    \caption{\small{Mean $\pm1$sem cumulative regret for 20 runs.}}
    \vspace{-4mm}
    \label{figure:toy prob}
\end{wrapfigure}

Fig \ref{figure:toy prob} shows the mean cumulative regret, with the standard error (sem) shaded. As we see, random search has linear regret, while PBT and PB2-R both provide incremental improvements. Our new approaches outperform, led by PB2-Mult. Thus, we see that when there is strong dependence between the two hyperparameters, the methods which explicitly model this significantly outperform. For details, see the following notebook \url{http://bit.ly/synthetic_func}, including open source implementations of all the discussed algorithms.

\subsection{Learning Hyperparameters and Data Augmentation for Generalization in RL}

There has recently been increased interest in testing RL algorithms procedurally generated environments \cite{sonic, procgen}. These typically consist of a set of parameters which are used to generate \emph{levels}, consisting of changes to the observation or layout of the environment, requiring agents to generalize \cite{procgen1, procgen2}. In this paper we focus on the Procgen environment \cite{procgen}. Given the recent success of data augmentation in Procgen, we adopt the author's implementation of DrAC \cite{ucb_drac}. As well as the categorical data augmentation, we also tune the regularization coefficient $\alpha_r$, the PPO clip parameter $\epsilon$, the learning rate and entropy exploration coefficient. The bounds for each, as well as the fixed hyperparameters, are given in the Appendix (Section \ref{sec:implementation_details}, Table \ref{table:drac_fixed} and \ref{table:drac_learned}). We learn these hyperparameters, as well as the categorical data augmentation \emph{on the fly} in a single training run, starting with random samples from a wide range.

\begin{figure}[H]
    \centering
    \vspace{-2mm}
    \begin{minipage}{0.99\textwidth}
    \subfigure{\includegraphics[width=.99\linewidth]{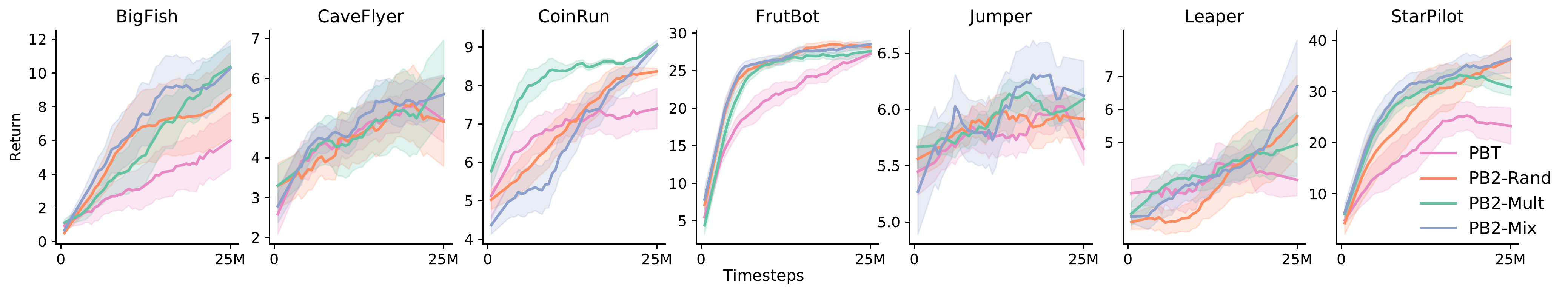}} 
    \vspace{-4mm}
    \caption{\small{\textbf{Test Performance}: Learning curves for seven Procgen games. Plots show the mean $\pm1$sem for the test performance of the best training agents in the population. Results are averaged over $5$ seeds.}}
    \label{figure:procgen_test}
    \vspace{-3mm}
    \end{minipage}
\end{figure}

We train a population of $B=4$ agents, each for $25$M steps, with $t_{\mathrm{ready}} = 500$k steps, thus the explore step is called approximately fifty times. We use seven Procgen games: BigFish, CaveFlyer, CoinRun, FruitBot, Jumper, Leaper and StarPilot. At the end, each agent is evaluated on $100$ train and test levels and we present the test performance from the agent with the best final training performance in each population. Our primary baseline is PB2-Rand, which we note was already shown to improve upon both Hyperband \cite{hyperband, asha} and vanilla BO \cite{bayesopt_nando} in \cite{pb2}. As such, we choose to use our computational resources to run five trials for all seven games rather than by confirming this result. Instead, we do include the original PBT algorithm as a baseline, as well as single agent results from \cite{ucb_drac}.  

\begin{table}[H]
\vspace{-2mm}
\begin{center}
\caption{\small{Test performance for seven Procgen games. $\dagger$ indicates training was conducted for a single agent for 25M timesteps. Hyperparameters were initialized at optimized values, meaning the effective number of timesteps is much higher. Final performance is the average of 100 trials and results were taken from \cite{ucb_drac}. $\ddagger$ indicates training was conducted by a population of four agents for 25M timesteps each, with several hyperparameters initialized at random. Each agent is evaluated for $100$ trials on train and test levels and we present the mean test performance of the agent with the best training performance. The ``Normalized Returns'' are with respect to the PB2-Rand baseline, $\star$ indicates $p<0.05$ in a t-test over PB2-Rand. Bold = within $1$ std of the best mean.
}}
\vspace{-3mm}
\label{table:procgen_results_new}
\scalebox{0.82}{
\begin{tabular}{ l | cc | cc | cc } 
\toprule
\textbf{Environment} & PPO$\dagger$  & UCB-DrAC$\dagger$ & PBT$\ddagger$ & PB2-Rand$\ddagger$ & PB2-Mult$\ddagger$ & PB2-Mix$\ddagger$  \\ 
\midrule 
BigFish & $4.0 \pm 1.2$ & $\mathbf{9.7 \pm 1.0}$ & $6.6\pm2.4$ & $8.2\pm0.1$ & $\mathbf{9.6\pm1.7}$ & $\mathbf{10.6\pm1.9}$ \\ 
CaveFlyer & $5.1 \pm 0.9$ & $\mathbf{5.3 \pm 0.9}$ & $4.4\pm1.0$ & $\mathbf{5.8\pm1.8}$ & $\mathbf{5.6\pm0.8}$ & $\mathbf{6.0\pm0.7}$ \\ 
CoinRun & $\mathbf{8.5\pm0.5}$ & $\mathbf{8.5\pm0.5}$ & $6.8\pm1.0$ & $\mathbf{8.1\pm0.1}$ & $\mathbf{8.5\pm0.4}$ & $\mathbf{8.1\pm0.2}$ \\ 
FruitBot & $26.7 \pm0.8$ & $28.3 \pm 0.9$ & $25.6\pm0.2$ & $\mathbf{29.5\pm1.9}$ & $\mathbf{29.6\pm0.8}$ & $\mathbf{29.0\pm0.6}$ \\ 
Jumper & $5.8\pm0.5$ & $\mathbf{6.4 \pm 0.6}$ & $\mathbf{6.2\pm0.3}$ & $5.4\pm0.0.5$ & $\mathbf{5.9\pm0.1}$ & $\mathbf{6.2\pm0.3}$ \\ 
Leaper & $\mathbf{4.9 \pm 0.7}$ & $\mathbf{5.0 \pm 0.3}$ & $3.6\pm0.6$ & $\mathbf{5.0\pm2.4}$ & $\mathbf{5.0\pm1.4}$ & $\mathbf{6.6\pm1.9}$ \\ 
StarPilot & $24.7 \pm 3.4$  & $30.2 \pm 2.8$ & $26.9\pm7.9$ & $33.6\pm3.1$ & $\mathbf{35.3\pm1.5}$ & $\mathbf{36.3\pm2.4}$ \\
\midrule
Normalized Returns (\%) &  &  & $85.6\pm22.4$ & $100\pm23.8$ & $105.3\pm14.6$ & $112.1\pm20.4^\star$ \\
\bottomrule
\end{tabular}}
\end{center}
\vspace{-5mm}
\end{table}

The final results are shown in Table \ref{table:procgen_results_new}, where each agent is evaluated for 100 trials, a common protocol in Procgen \cite{ucb_drac,jiang2020prioritized}. For population based methods, we select the agent with the best \emph{training performance} and show its' test performance. Given the challenge of comparing results, we present an additional metric which is a score normalized by the PB2-Rand performance. For each seed in each game, the score is divided by the mean PB2-Rand score for that given game. We then show the mean of these normalized scores, comprising of $35$ results. We also show full test learning curves in Fig. \ref{figure:procgen_test}, while training performance is shown in Fig. \ref{figure:procgen_train} in the Appendix.

Despite using a small population size, we are able to train policies that match or outperform state-of-the-art methods which used a much larger grid search. We see that PB2-Rand significantly outperforms PBT, and furthermore our new approaches modeling categorical variables improve upon PB2-Rand. Notably, we see the strongest performance from PB2-Mix, which is able to exploit the dependence between all variables in a joint kernel. The average gains for PB2-Mix over PB2-Rand are statistically significant, with $p=0.044$ in Welch's t-test. 

\begin{figure}[H]
    \centering
    \vspace{-3mm}
    \begin{minipage}{0.99\textwidth}
    \centering\subfigure[Hyperparameter impact by augmentation type]{\includegraphics[width=.57\linewidth]{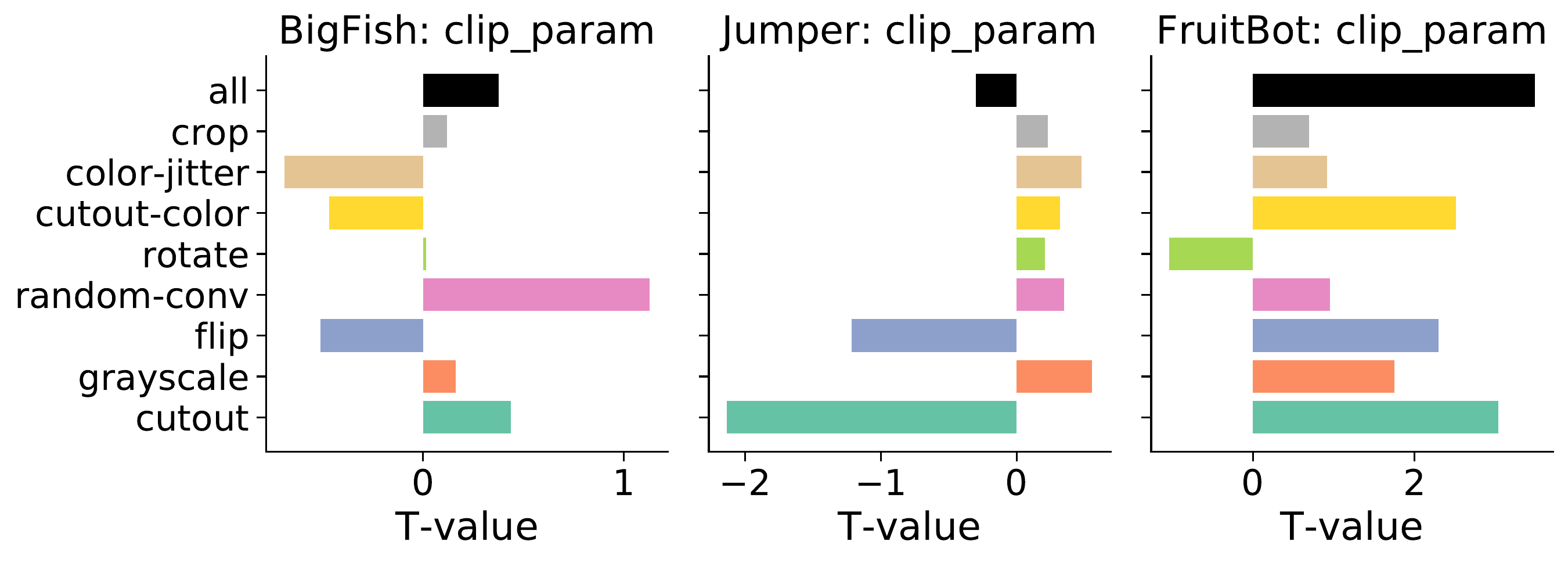}}
    \centering\subfigure[Augmentation selection efficacy]{\includegraphics[width=.42\linewidth]{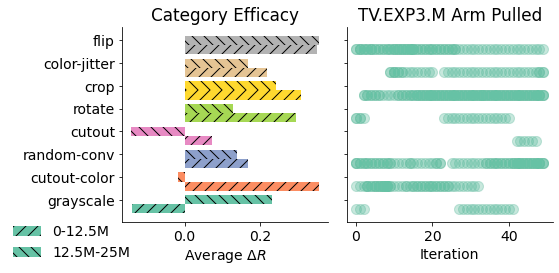}}
    \vspace{-4mm}
    \caption{\small{(a) T-values for univariate regressions predicting reward change given a continuous hyperparameter, conditioned on category. (b) Left: Mean reward change for each category in the BigFish environment, across all experiments, in the first and second half of training. Right: All arms selected by TV.EXP3.M.}}
    \label{figure:tvexp3_effectiveness}
    \vspace{-5mm}
    \end{minipage}
\end{figure}

\textbf{Do we need to model dependence?} To explore the dependence of the parameters in the Procgen setting, we tested relationship between each continuous hyperparameter and the subsequent change in reward ($f_t(.)$). For each \emph{individual} hyperparameter, we \emph{conditioned the data on the category} currently being used, and fit a linear regression model. In Fig. \ref{figure:tvexp3_effectiveness}.a) we show the t-values for three separate examples, with the full grid in the Appendix (Fig. \ref{figure:reg_tstats}, Section \ref{sec:addition_experiments}). As we see, the relationship between continuous variables and learning performance varies depending on the category selected, confirming a dependence in Procgen \cite{hyp_dep_gen}. In particular, we see for BigFish and Jumper the clip parameter relationship with training performance is heavily dependent on the data augmentation type used. Interestingly we also include an example from FruitBot, where the dependence seems to be weaker as the clip param is positively related with improving policies for all but one category. This is likely the reason for relatively stronger performance for PB2-Rand in FruitBot.

We also investigate the effectiveness of TV.EXP3.M in selecting the data augmentation type. In Fig. \ref{figure:tvexp3_effectiveness}.b) we show the mean change in reward for each category in the BigFish environment, as well as all the categories selected by the new PB2 variants using TV.EXP3.M. We see that the three most selected augmentations (flip, crop and color-jitter) all lead to positive rewards. Interestingly, we also see that cutout-color is frequently selected at the \emph{beginning}, but not used at all at the end where it no longer improves training, thus demonstrating TV.EXP3.M is able to adapt effectively.


\begin{wrapfigure}{R}{0.23\textwidth}
    \centering
    \vspace{-6mm}
    \centering\subfigure{\includegraphics[width=0.98\linewidth]{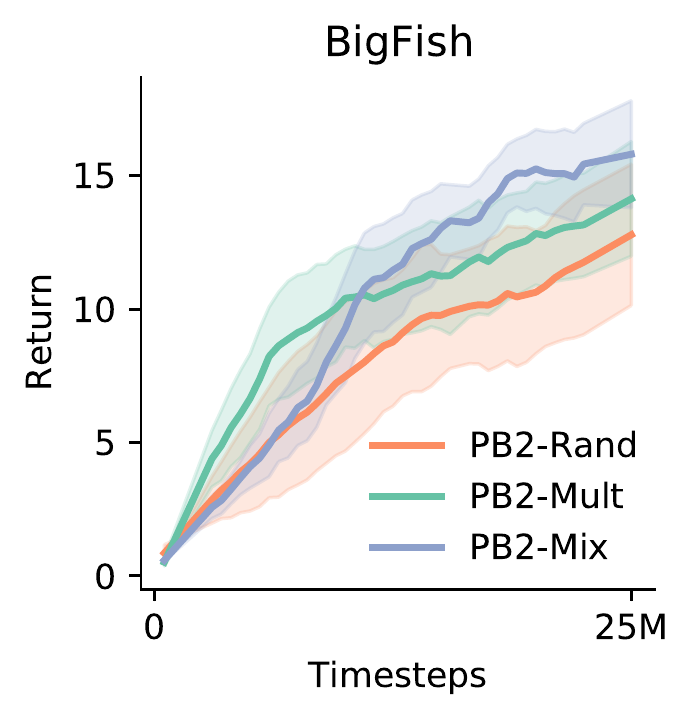}}
    \vspace{-4mm}
    \caption{\small{Mean test performance, sem shaded.}}
    \vspace{-2mm}
    \label{figure:bigfish12}
\end{wrapfigure}

\textbf{Does it scale?} The key benefit of PBT algorithms is the ability to make use of large population sizes with the same wall clock time of a single agent. Thus far, we have only required four agents to achieve strong performance on the Procgen tasks considered, but the question remains: how do these new methods scale? To test this we train with a population size of $12$ for PB2-Rand, PB2-Mult and PB2-Mix on the BigFish task. We repeat the experiment for three seeds and report the mean test return from the agent with the best training performance in \ref{figure:bigfish12}. As we see, PB2-Mix remains the strongest method, with final performance of $16\pm1.15$. This is comparable to the state of the art performance  \cite{jiang2020prioritized, raileanu2021decoupling}. Since our method is orthogonal to these, it is likely combining them could provide further gains.

\subsection{Discussion} 
\label{sec:discussion}

Overall our results make a strong case for using population based AutoRL methods on the Procgen benchmark, with further improvement as both continuous (PB2-Rand) and categorical (PB2-Mix, PB2-Mult) variables are selected efficiently. We introduced a new class of hierarchical methods with two variations, each with their own strengths and weaknesses. Firstly, PB2-Mult, which has the strongest theoretical results and performance on the synthetic task, was less impressive in Procgen. We hypothesize that the continuous/categorical dependence may not be strong enough to justify splitting the dataset, which may mean we do not have enough data with only four agents and eight categories. Indeed, the performance for PB2-Mult may be superior when using greater population sizes or smaller frequency of updates.  

Conversely, PB2-Mix provides the ability to automatically trade-off this dependence, and the experimental results show an improvement over the PB2-Rand baseline in almost every setting. However, the limitation of this approach is that optimizing the mix parameter at each explore step does introduce additional computational complexity, which increases the runtime proportionally to the amount of data. We note the runtime can be improved in several ways, such as by fixing the kernel hyperparameters, using a sliding window of data, or using a faster GP implementation (an active area of research). 

Indeed, there is inevitably a trade-off in terms of the desired efficiency of the explore step and the computational cost one is willing to incur. When the RL experiments are expensive to run, for example with vision based inputs, the explore steps (measured in a few seconds) make up a trivial amount of time compared to the several hours of training. However in the opposite setting where evaluations are extremely cheap, for example using a fast simulator with a large compute cluster, it may even be preferable to use the original PBT since there could be enough samples to explore the space with random search. We believe in most cases PB2-Mix would be the best approach, which is supported by our experiments. 

Finally we note another potential area for improvement in all the algorithms in this paper (and AutoRL in general): we make decisions in the explore/exploit step based on \emph{training} performance (on changing levels), ignoring generalization ability. This may lead to accidentally removing the best agents on the test set, while it may also lead to inaccuracy in our GP models. One potentially promising approach may be to consider a fixed set of ``validation levels'', learned using techniques such as \emph{Prioritized Level Replay} \cite{jiang2020prioritized}, but we leave this to future work.

\section{Conclusion and Future Work}
\label{sec:conclusion}

In this paper we expand the capabilities of population based bandits (PB2) by making it possible to efficiently select \emph{both} continuous and categorical hyperparameters. We introduced a new time varying multi-armed bandit algorithm for selecting categorical hyperparameters, and presented a hierarchical approach to subsequently select continuous parameters. We believe this work is an important step in increasing the capability of population based approaches for AutoRL. This should provide benefits at both ends of the spectrum: it can make RL accessible to a wider audience by significantly lowering the cost of tuning hyperparameters, while also potentially leading to stronger performance for those with larger resources. We also showed the effectiveness of learning both hyperparameters and data augmentation on the fly, for challenging procedurally generated environments. We are not the first to propose population based methods in this setting \cite{neuroprocgen}, but we provided strong evidence that PBT-style hyperparameter tuning can provide significant benefits.

We are particularly excited by many future directions from here. The most natural is to learn more: can we learn policy architectures in the explore step, for example making use of kernels between networks \cite{wan2021interpretable}? Can we learn algorithms altogether, extending exciting recent work \cite{learnedPG, evolving_algos}? For the specific case of Procgen environments, can we jointly learn hyperparameters, augmentation and prioritize which levels to train on \cite{jiang2020prioritized}? In addition, since we have a population of agents, can we improve performance by sharing experience across agents \cite{franke2021sampleefficient}? Can we benefit from encouraging these agents to be diverse with respect to one another \cite{dvd, qdnature}? We are also curious to see whether recent innovations in other areas of AutoML can be combined in a PBT-style framework \cite{automl_zero}. We think the combination of these ideas could lead to dramatic gains in performance for RL agents. 


\section*{Acknowledgements}

The experiments in this paper were conducted using AWS. The authors would like to thank Roberta Raileanu for providing open source code and both Roberta and Minqi Jiang for discussion on reporting for Procgen results, as well as Xingyou Song and Yingjie Miao for discussion around AutoRL. Finally, this work was improved thanks to constructive feedback from anonymous reviewers.

\bibliographystyle{abbrv}
\bibliography{refs}

\newpage
\appendix

\newpage

\section{Implementation Details}
\label{sec:implementation_details}
\subsection{Hyperparameter Ranges}

\textbf{RL Hyperparameters}: Below are the continuous hyperparameters optimized as well as the fixed parameters, which were all taken from \cite{ucb_drac}. For the categorical variables, we use the data augmentations used in \cite{ucb_drac, rad, drq}: \textit{crop, grayscale, cutout, cutout-color, flip, rotate, random convolution and color-jitter}. 

\begin{table}[h!]
    \centering
    \begin{minipage}{.45\linewidth}
        \caption{\footnotesize{DrAC Learned Hyperparameters}}
        \label{table:drac_learned}
        \centering
        \scalebox{0.85}{
        \begin{tabular}{l*{2}{c}r}
        \toprule
        \textbf{Parameter} & \textbf{Value}  \\
        \midrule
        PPO Clip ($\epsilon$) & $[0.01, 0.5]$  \\
        Learning Rate & $[10^{-5}, 10^{-3}]$  \\
        Entropy Coeff & $[0, 0.2]$ \\
        Regularization Parameter ($\alpha_r$) & $[0.01, 0.5]$ \\
        \bottomrule
        \\
    \end{tabular}}
    \end{minipage}
    \begin{minipage}{.45\linewidth}
        \caption{\footnotesize{DrAC Fixed Hyperparameters}}
        \label{table:drac_fixed}
        \centering
        \scalebox{0.85}{
        \begin{tabular}{l*{2}{c}r}
        \toprule
        \textbf{Parameter} & \textbf{Value}  \\
        \midrule
        $\gamma$ & 0.999  \\
        $\lambda$ & 0.95  \\
        \# timesteps per rollout & 256 \\
        \# epochs per rollout & 3 \\
        \# minibatches per epoch & 8 \\
        optimizer & Adam \\
        
        \bottomrule
        \\
    \end{tabular}}
    \end{minipage}
\end{table}

\textbf{PB2 Hyperparameters}: We use the same UCB hyperparameters as in \cite{pb2}, which come from \cite{bogunovic2016time}. Concretely, we set $c1=0.2$, $c2=0.4$. We believe performance may be increased by selecting these parameters more carefully. However, it is promising to see that this fine tuning is not crucial for performance.

\subsection{Environment Details}

We used five games from the Procgen environment \cite{procgen}. We chose these games as there appeared to be a greater degree of variance in the optimal data augmentation in \cite{ucb_drac}. All protocols for the environment are from \cite{ucb_drac}, as we followed the author's open source implementation for agent training\footnote{see: \url{https://github.com/rraileanu/auto-drac}}, and only varied the hyperparameters and data augmentation chosen. 

\begin{figure}[H]
    \centering
    \begin{minipage}{0.99\textwidth}
    \centering\subfigure[\textbf{BigFish}]{\includegraphics[width=.22\linewidth]{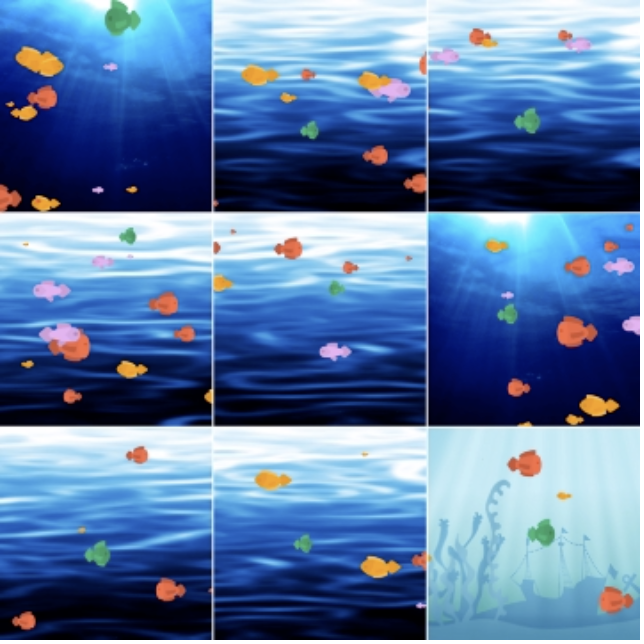}}    
    \centering\subfigure[\textbf{CaveFlyer}]{\includegraphics[width=.22\linewidth]{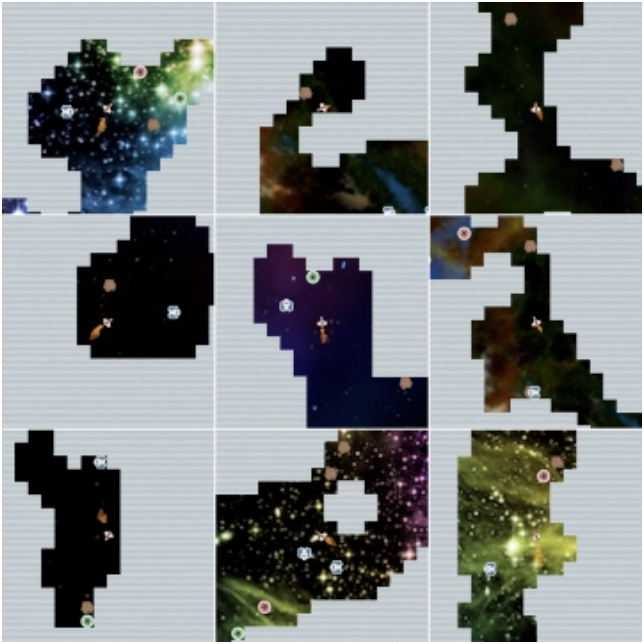}}    
    \centering\subfigure[\textbf{CoinRun}]{\includegraphics[width=.22\linewidth]{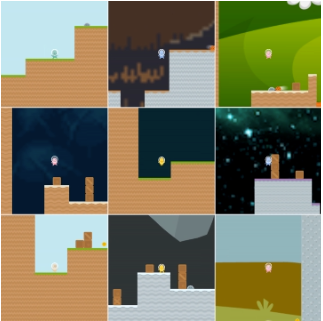}}
    \centering\subfigure[\textbf{FruitBot}]{\includegraphics[width=.22\linewidth]{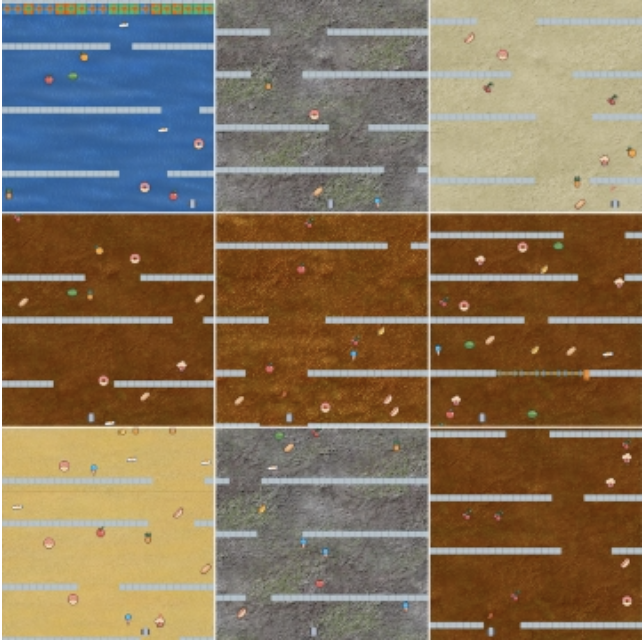}}
    \centering\subfigure[\textbf{Jumper}]{\includegraphics[width=.22\linewidth]{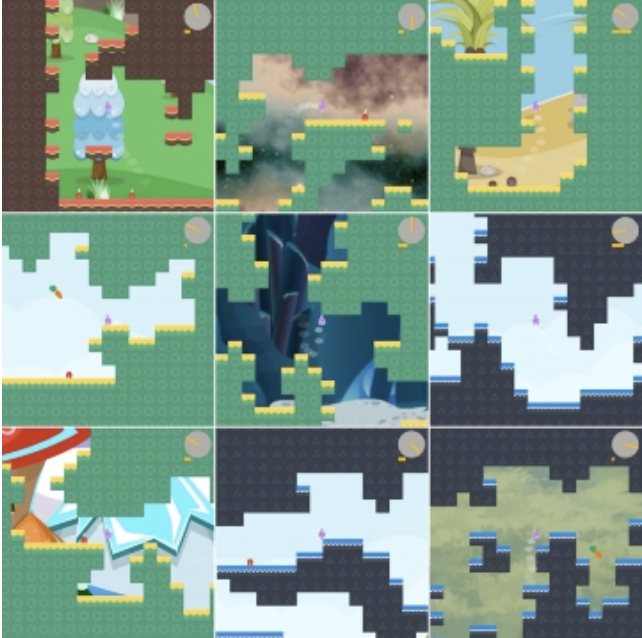}} 
    \centering\subfigure[\textbf{Leaper}]{\includegraphics[width=.22\linewidth]{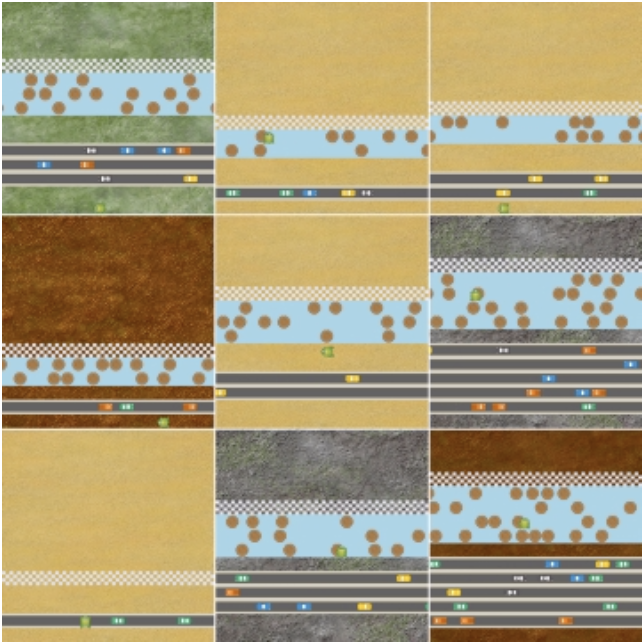}} 
    \centering\subfigure[\textbf{StarPilot}]{\includegraphics[width=.22\linewidth]{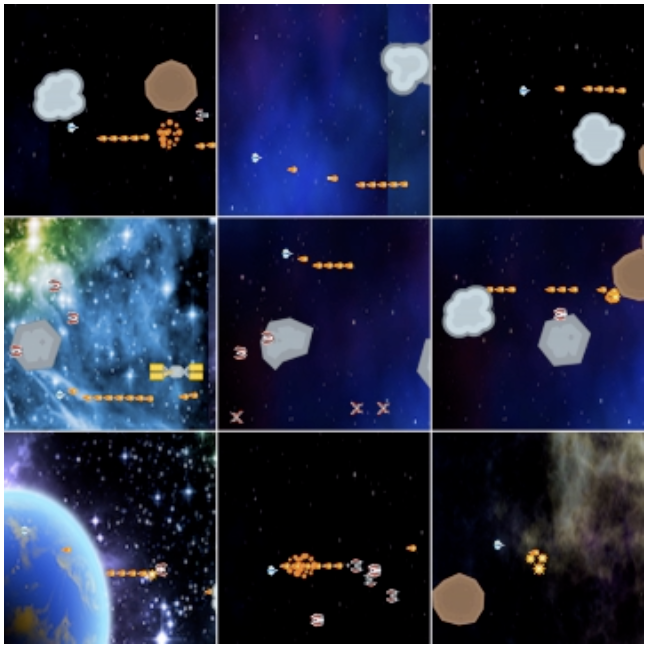}} 
    \caption{\small{Example levels from the seven games considered from \cite{procgen}}}
    \label{figure:env_images}
    \end{minipage}
\end{figure}

\subsection{Infrastructure Details}

Each trial was run on a single GPU, taking around three days to train all four agents. This was due to the nature of our existing compute infrastructure and could likely be made significantly faster with parallelization. The existing code is thus not optimized to work in a distributed fashion, however it will be open sourced in a similar structure to the open source PB2-Rand implementation (part of the Ray library \cite{ray}). Since we followed the open source version of PB2-Rand, it is a trivial extension to include PB2-Mult and PB2-Mix, using new kernels provided both in the supplementary material (as part of the research code) or the synthetic function notebook (see the link in Section \ref{sec:experiments}). 

\section{Additional Results}
\label{sec:addition_experiments}

\begin{table}[H]
\vspace{-2mm}
\begin{center}
\caption{\small{\textbf{Train} performance for seven Procgen games. $\dagger$ indicates training was conducted for a single agent for 25M timesteps. Hyperparameters were initialized at optimized values, meaning the effective number of timesteps is much higher. Final performance is the average of 100 trials and results were taken from \cite{ucb_drac}. $\ddagger$ indicates training was conducted by a population of four agents for 25M timesteps each, with several hyperparameters initialized at random. Each agent is evaluated for $100$ trials on train and test levels and we present the mean train performance of the agent with the best training performance. 
}}
\label{table:procgen_results_train}
\scalebox{0.82}{
\begin{tabular}{ l | cc | cc | cc } 
\toprule
\textbf{Environment} & PPO$\dagger$  & UCB-DrAC$\dagger$ & PBT$\ddagger$ & PB2-Rand$\ddagger$ & PB2-Mult$\ddagger$ & PB2-Mix$\ddagger$  \\ 
\midrule 
BigFish & $8.9 \pm 1.5$ & $13.2\pm2.2$ & $12.0\pm3.3$ & $20.8\pm1.8$ & $18.2\pm2.0$ & $17.1\pm2.9$ \\ 
CaveFlyer & $6.8\pm0.6$ & $5.7\pm0.6$ & $7.2\pm0.4$ & $7.3\pm1.7$ & $7.0\pm1.3$ & $7.5\pm0.9$ \\ 
CoinRun & $9.3\pm0.3$ & $9.5\pm0.3$ & $8.2\pm0.8$ & $10.0\pm0$ & $9.9\pm0.2$ & $9.6\pm0.2$ \\ 
FruitBot & $29.1\pm1.1$ & $29.5\pm1.2$ & $27.4\pm0.2$ & $32.2\pm0.7$ & $30.5\pm1.4$ & $30.9\pm1.2$ \\ 
Jumper & $8.3\pm0.4$ & $8.1\pm0.7$ & $8.3\pm0.2$ & $9.0\pm0.2$ & $8.9\pm0.1$ & $9.2\pm0.5$ \\ 
Leaper & $5.5\pm0.4$ & $5.3\pm0.5$ & $4.1\pm0.5$ & $6.9\pm2.0$ & $5.6\pm2.1$ & $7.1\pm2.3$ \\ 
StarPilot & $29.8\pm2.3$ & $35.3\pm2.2$ & $33.6\pm7.4$ & $44.1\pm2.7$ & $40.3\pm1.3$ & $41.8\pm2.7$ \\ 
\bottomrule
\end{tabular}}
\end{center}
\vspace{-5mm}
\end{table}

\begin{figure}[H]
    \centering
    \vspace{-2mm}
    \begin{minipage}{0.99\textwidth}
    \subfigure{\includegraphics[width=.99\linewidth]{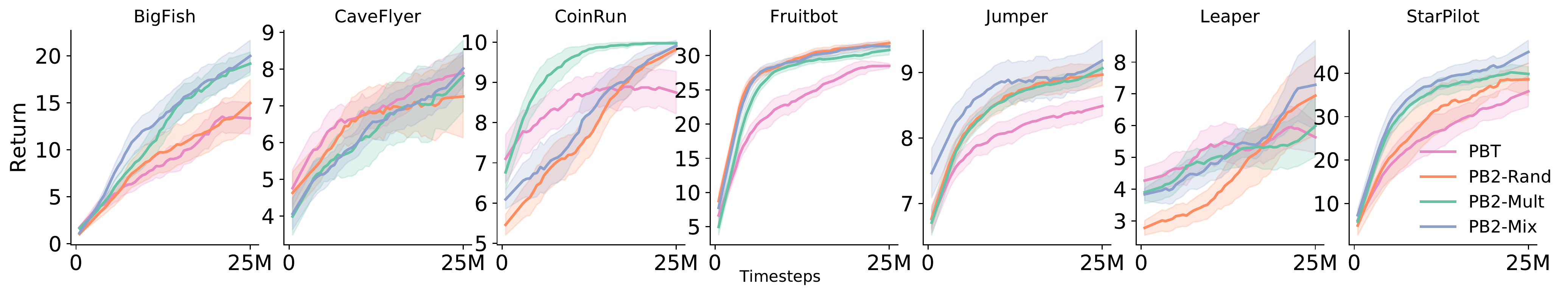}} 
    \vspace{-3mm}
    \caption{\small{\textbf{Train Performance}: Learning curves for all seven Procgen games. Plots show the mean $\pm1$sem for the best training performance within the population. Results are averaged over $5$ seeds.}}
    \label{figure:procgen_train}
    \end{minipage}
\end{figure}

In Table \ref{table:procgen_results_train} we show training performance for the agents shown in the main paper. Interestingly, we see strong training performance for PB2-Rand, indicating a larger generalization gap. We also show training learning curves for all algorithms in Fig. \ref{figure:procgen_train}, where we once again report the best agent at each timestep.

\newpage
\textbf{Is there dependence between data augmentation and hyperparameters in Procgen?} Next we consider the dependence between the hyperparameters. For each environment, we fit a linear regression model predicting the change in reward for $500$k steps of training (i.e. one step of PB2), with the independent variable being \textbf{one} of the continuous hyperparameters. Colors correspond to the setting where we condition on an individual category (by creating a subset), while black corresponds to using the entire dataset. We use the data from all trials and show the results in In Fig. \ref{figure:reg_tstats}.

\begin{figure}[H]
    \centering
    \begin{minipage}{0.99\textwidth}
    \vspace{-2mm}
    \subfigure{\includegraphics[width=.97\linewidth]{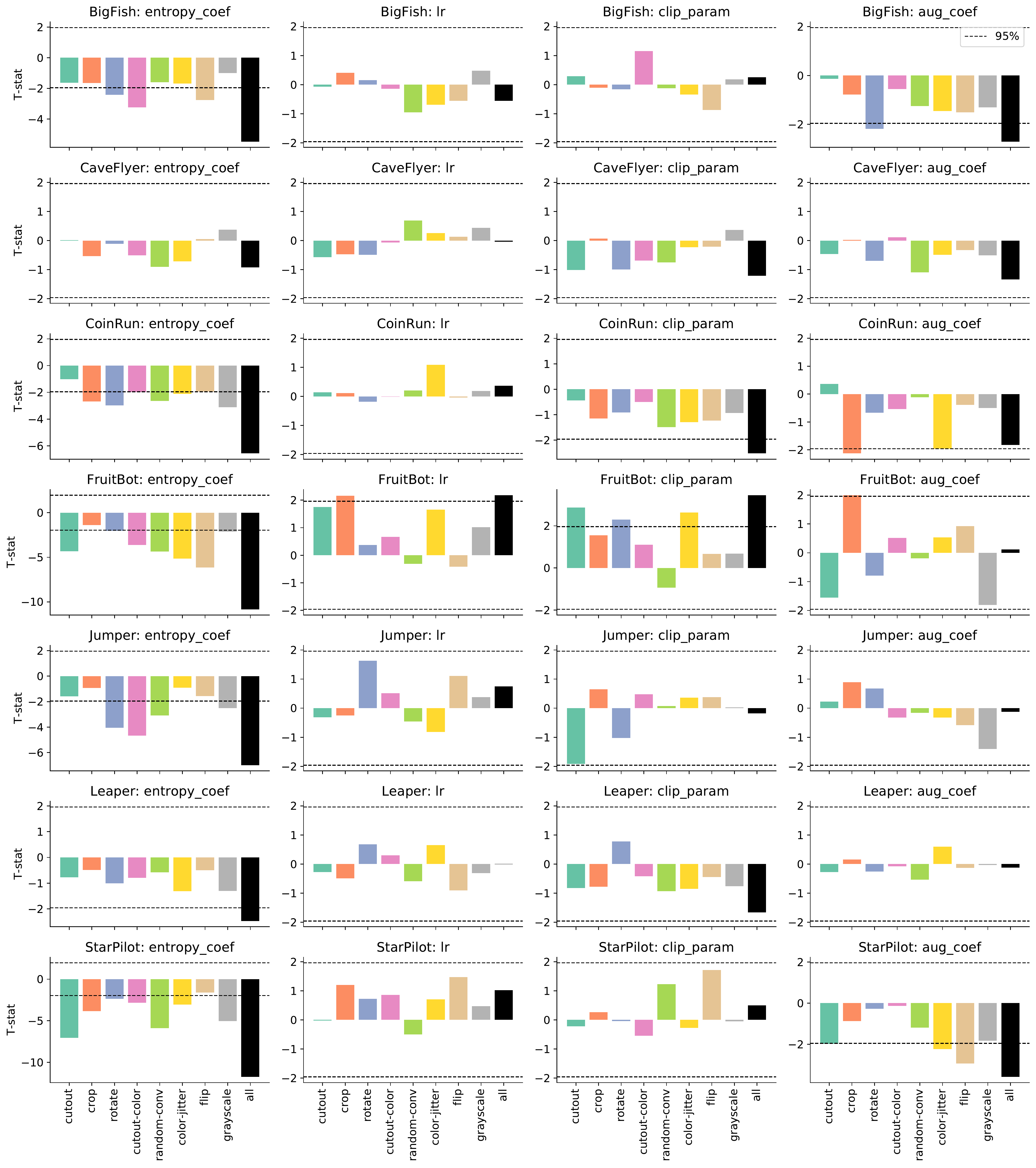}} 
    \vspace{-4mm}
    \caption{\small{t-statistic for the independent variable (one continuous hyperparameter, in the title of each plot) in predicting the change in reward for the given environment. Blue columns represent the data being from one. The dotted line indicates $p<0.05$. }}
    \vspace{-5mm}
    \label{figure:reg_tstats}
    \end{minipage}
\end{figure}

As we see, in some cases the category does not have any impact on the relationship, for example the entropy coefficient for CoinRun: an increase in entropy appears to be negatively related with training performance across all augmentations. However, for some settings such as the clip parameter in BigFish, we see large swings in the t-stat depending on category. We even see some settings where the aggregate relationship (grey) is very small, but conditioning on the category makes it worthwhile to tune the hyperparameter, for example the learning rate for CaveFlyer. In this case it may be challenging for PB2-Rand to achieve gains from tuning the learning rate. We note  this analysis is purely illustrative and has many assumptions, in particular 1) we use a linear univariate model 2) we ignore the time-varying component.

\newpage
\textbf{Learned Schedules} The next pages contain learned schedules from all agents trained by our algorithms and baselines. In each case we show all agents for all seeds, with blue corresponding to a config that contributed to the final best (training) agent. The most notable observation across all environments is the stark contrast between PBT and PB2-based methods. PBT clearly increments parameters, and gradually shifts during training, while PB2 rapidly explores the boundaries. We note in some cases that PB2-Mult never re-explores the middle regions, which we hypothesize is due to the lack of data when creating separate models for each of the eight categories.

\begin{figure}[H]
    \vspace{-3mm}
    \centering
    \begin{minipage}{0.99\textwidth}
    \subfigure{\includegraphics[width=.99\linewidth]{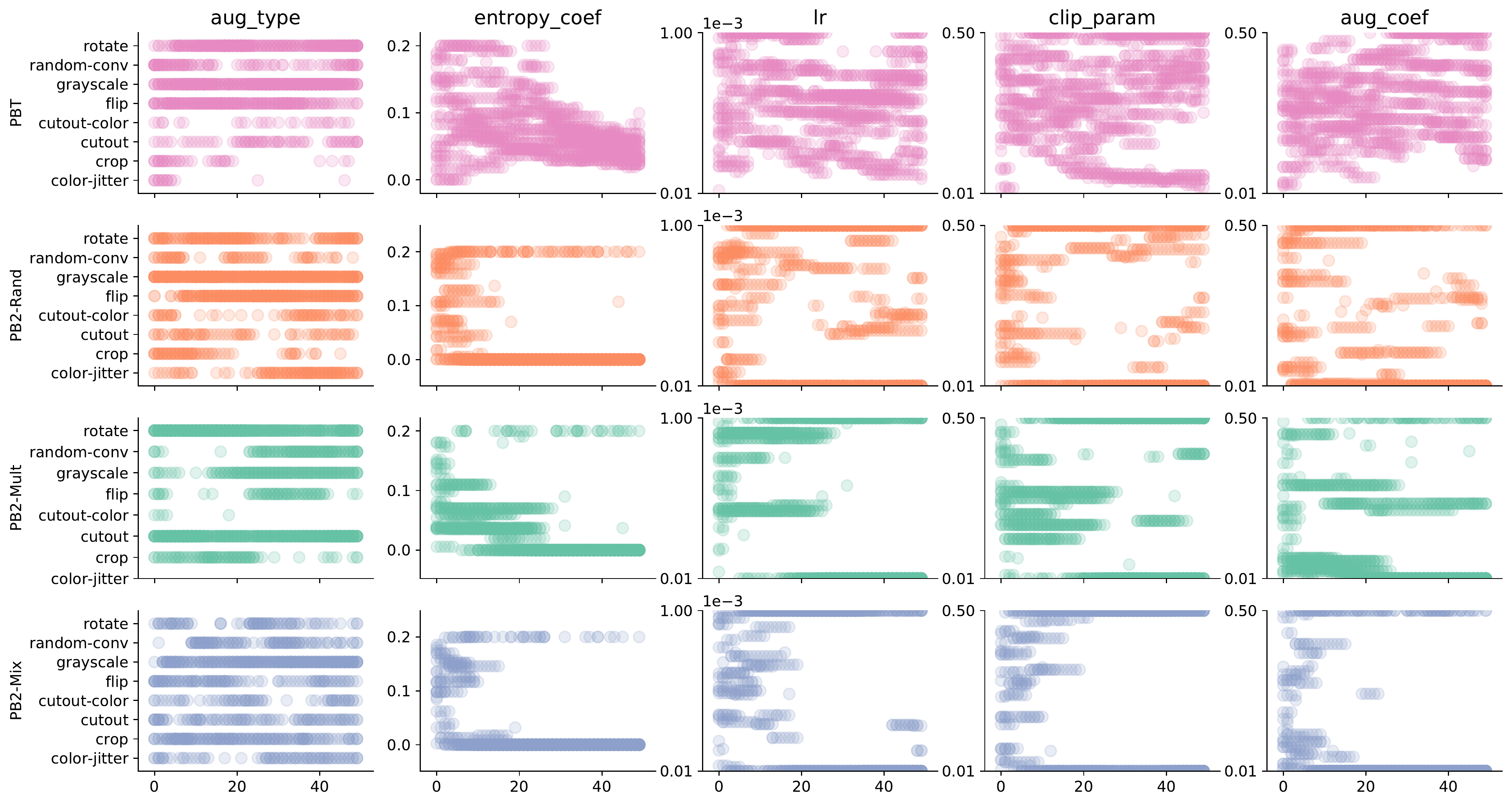}}  
    \caption{\small{Learned schedules for BigFish by algorithm, for all seeds. Each point corresponds to one agent.}}
    \label{figure:schedule_bigfish}
    \end{minipage}
\end{figure}
\begin{figure}[H]
    \centering
    \begin{minipage}{0.99\textwidth}
    \subfigure{\includegraphics[width=.99\linewidth]{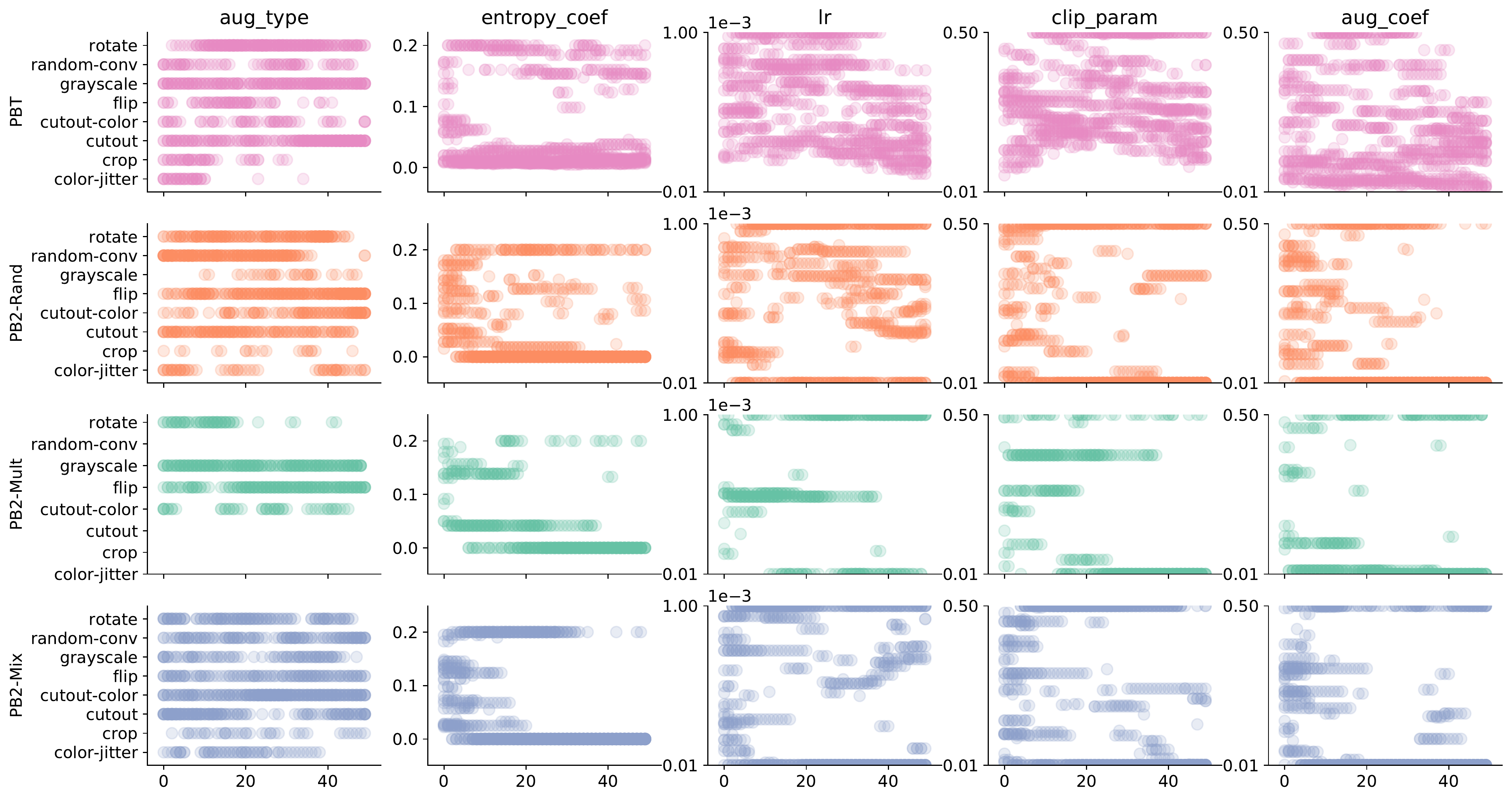}}  
    \caption{\small{Learned schedules for CaveFlyer by algorithm, for all seeds. Each point corresponds to one agent.}}
    \label{figure:schedule_caveflyer}
    \end{minipage}
\end{figure}

\begin{figure}[H]
    \centering
    \begin{minipage}{0.99\textwidth}
    \subfigure{\includegraphics[width=.99\linewidth]{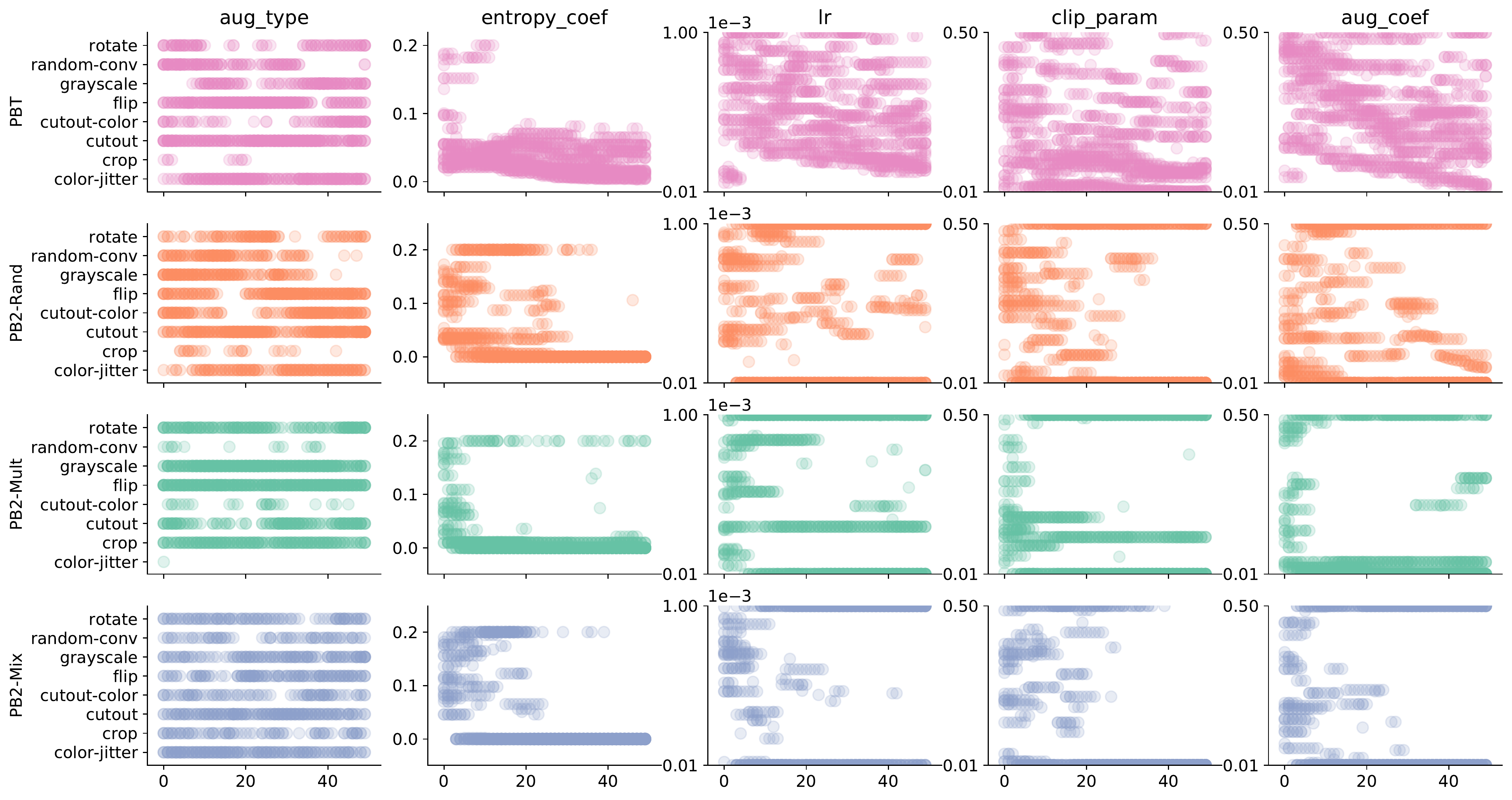}}  
    \caption{\small{Learned schedules for CoinRun by algorithm, for all seeds. Each point corresponds to one agent.}}
    \label{figure:schedule_coinrun}
    \end{minipage}
\end{figure}

\begin{figure}[H]
    \centering
    \vspace{-20mm}
    \begin{minipage}{0.99\textwidth}
    \subfigure{\includegraphics[width=.99\linewidth]{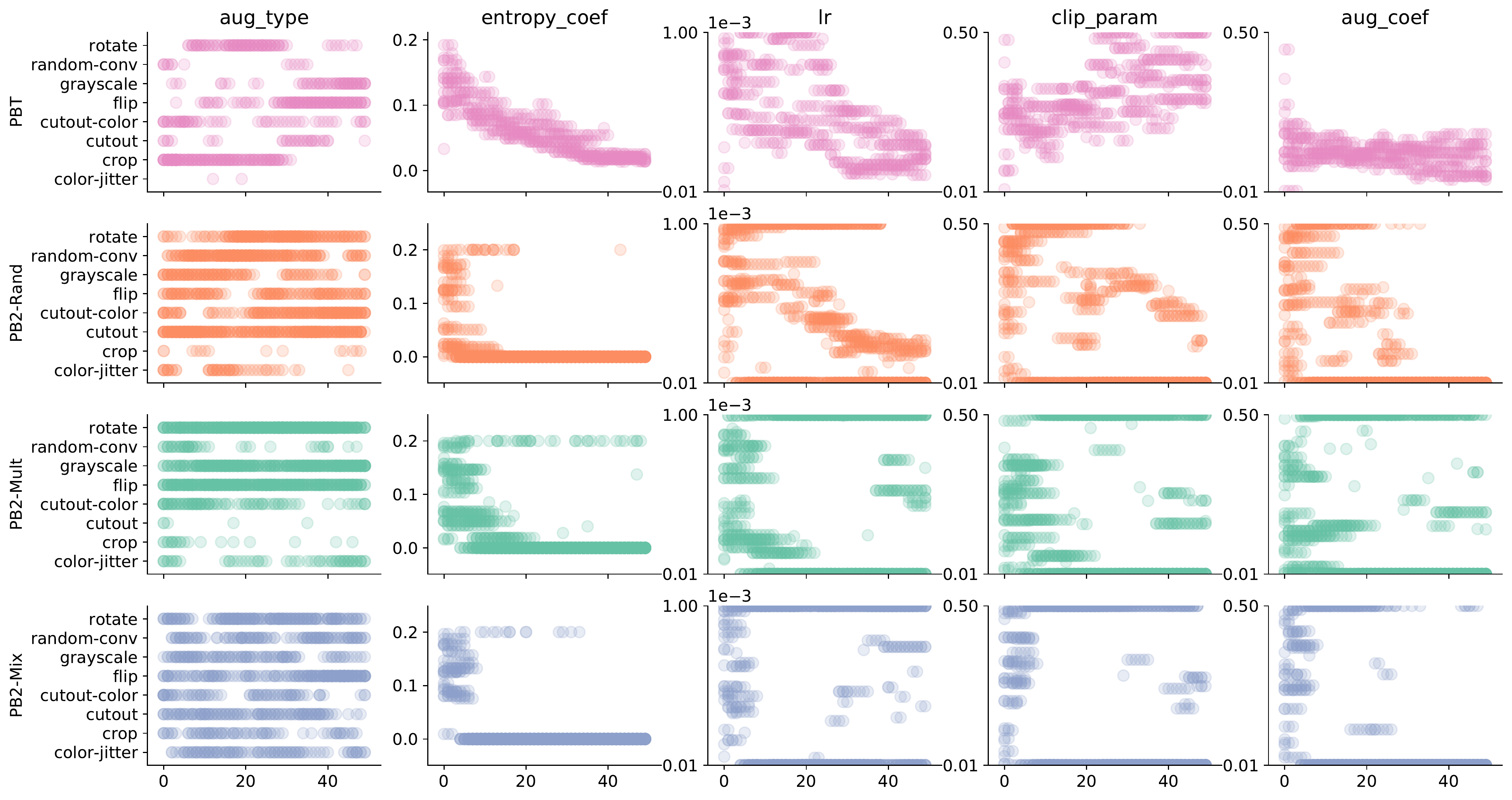}}  
    \caption{\small{Learned schedules for FruitBot by algorithm, for all seeds. Each point corresponds to one agent.}}
    \label{figure:schedule_fruitbot}
    \end{minipage}
\end{figure}

\begin{figure}[H]
    \centering
    \begin{minipage}{0.99\textwidth}
    \subfigure{\includegraphics[width=.99\linewidth]{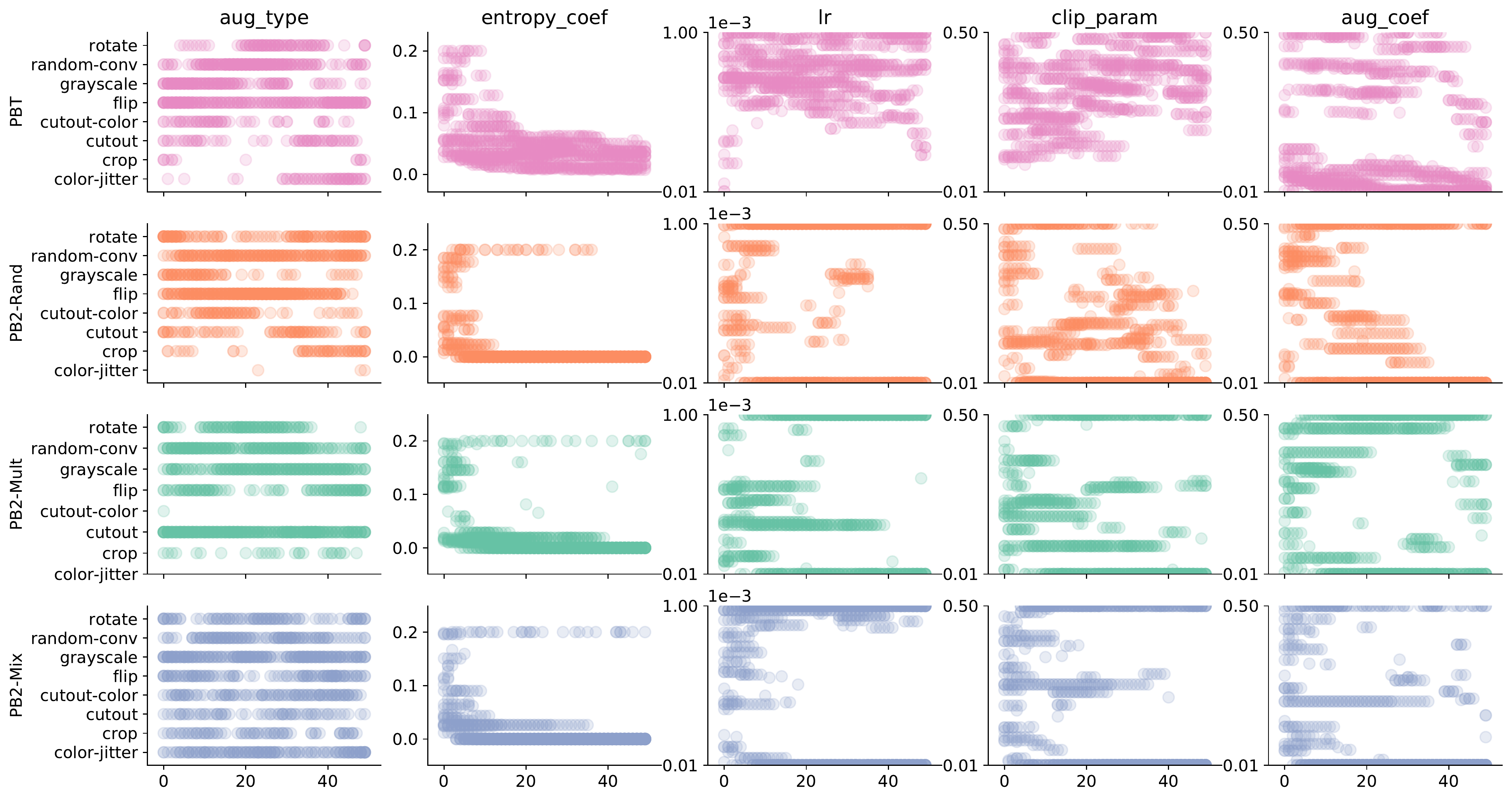}}  
    \caption{\small{Learned schedules for Jumper by algorithm, for all seeds. Each point corresponds to one agent.}}
    \label{figure:schedule_jumper}
    \end{minipage}
\end{figure}

\begin{figure}[H]
    \centering
    \vspace{-20mm}
    \begin{minipage}{0.99\textwidth}
    \subfigure{\includegraphics[width=.99\linewidth]{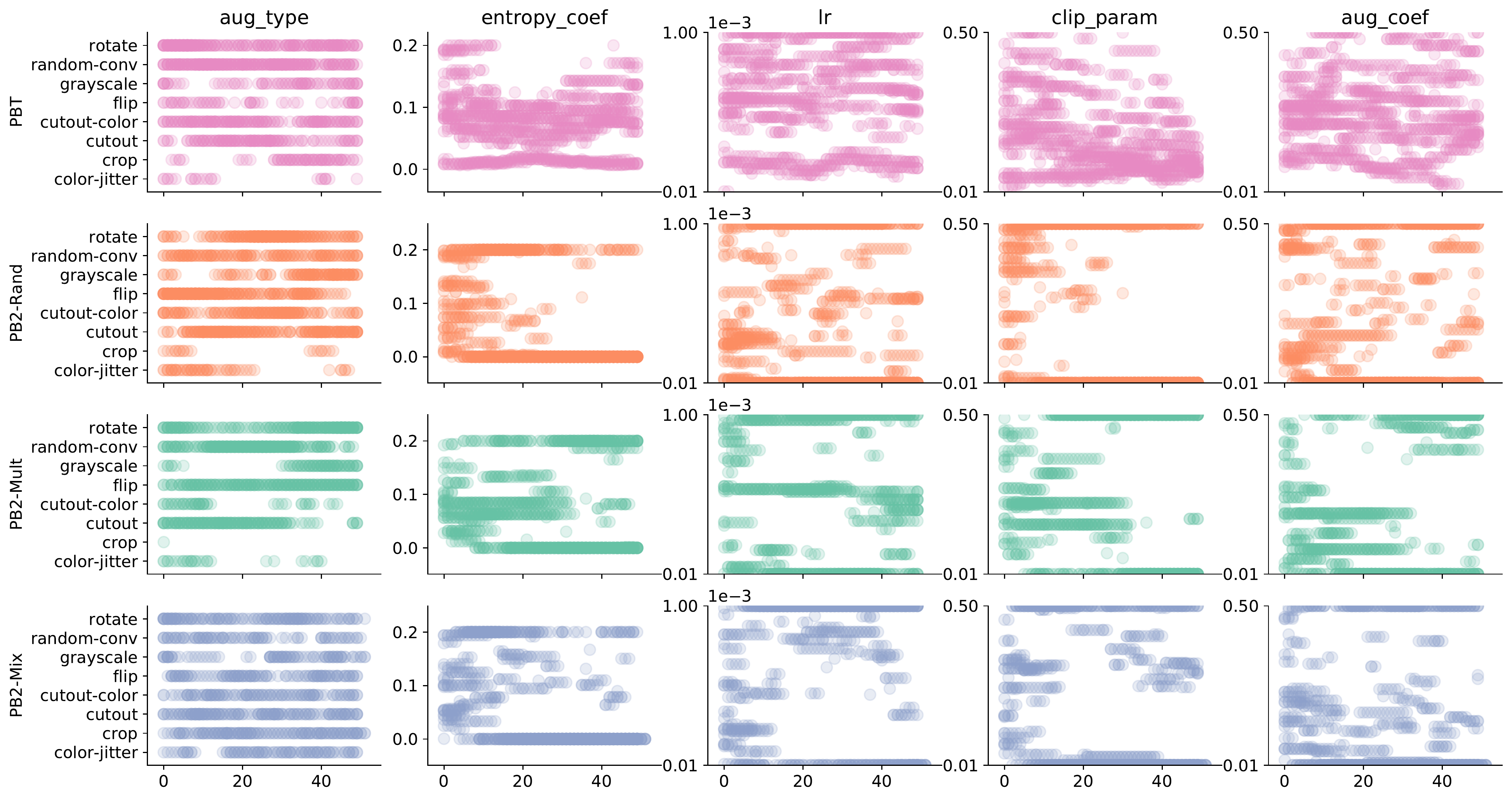}}  
    \caption{\small{Learned schedules for Leaper by algorithm, for all seeds. Each point corresponds to one agent.}}
    \label{figure:schedule_leaper}
    \end{minipage}
\end{figure}

\begin{figure}[H]
    \centering
    \begin{minipage}{0.99\textwidth}
    \subfigure{\includegraphics[width=.99\linewidth]{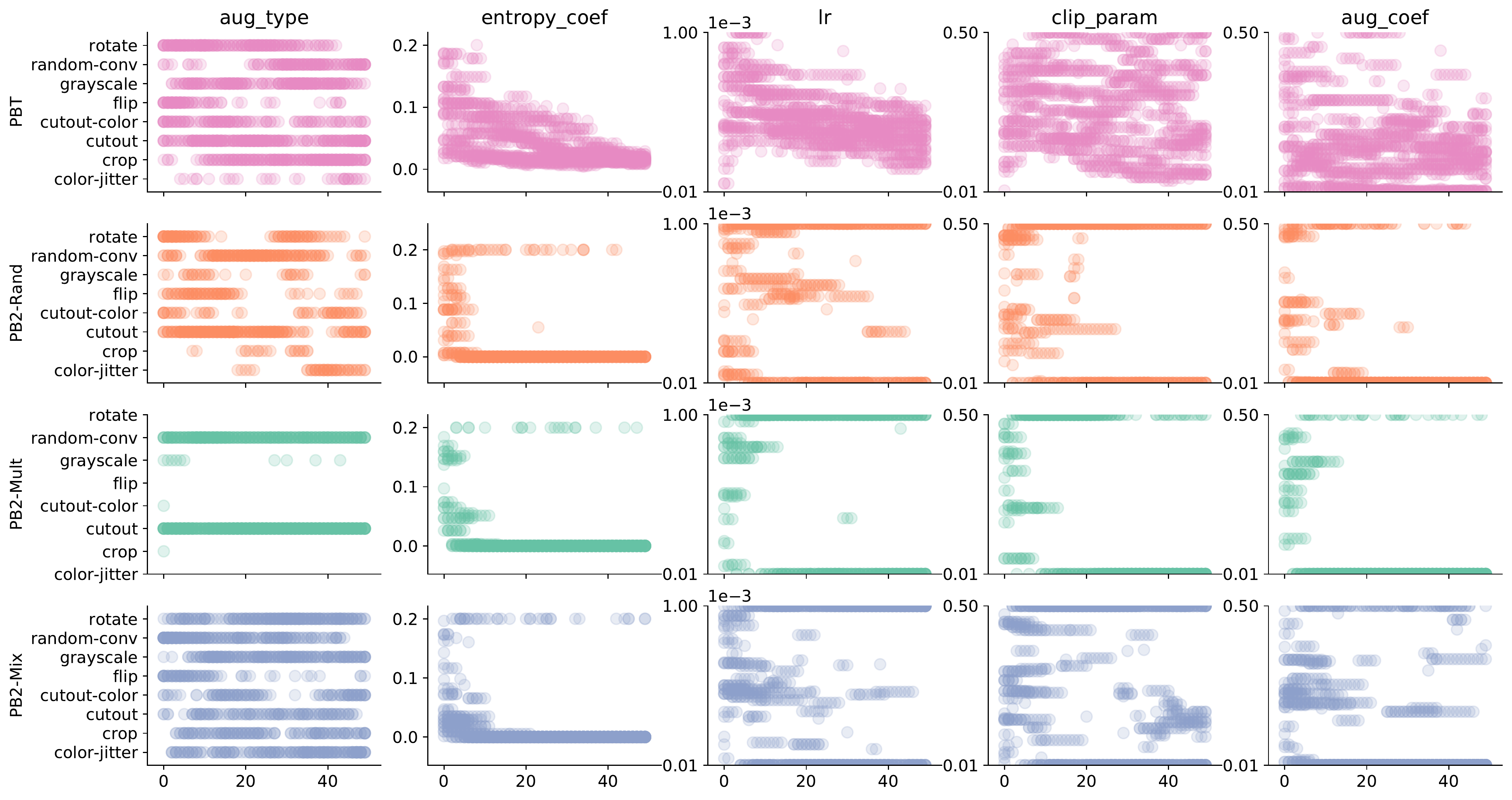}}  
    \caption{\small{Learned schedules for StarPilot by algorithm, for all seeds. Each point corresponds to one agent.}}
    \label{figure:schedule_starpilot}
    \end{minipage}
\end{figure}

\newpage
\section{Additional Background}

\subsection{DrAC}

In recent times, there has been increased interest in \emph{generalization} of RL agents, after multiple works showed many RL agents are simply overfitting to a single deterministic training environment \cite{packer2019assessing, coinrun}. One approach to produce more generalizable agents is data augmentation, which has recently shown impressive results \cite{dataaug_zs, drq, rad, curl}. To formalize the data augmentation step, we follow \cite{drq} and define an optimality-invariant state transformation $f:\mathcal{S} \times \mathcal{H} \rightarrow \mathcal{S}$ as a mapping that preserves both the policy $\pi$ and value function $V$, i.e. $V(s) = V(f(s, \nu))$ and $\pi(a|s) = \pi(a|f(s,\nu)), \forall s\in\mathcal{S}, \nu \in \mathcal{H}$, where $\nu$ are parameters of $f(.)$ drawn from the set of possible parameters $\mathcal{H}$. In this work we focus on the formulation from \cite{ucb_drac} who propose additional loss terms for regularizing the policy and value function:
\begin{align}
     G_\pi = \text{KL}[\pi_\theta(a|f(s,\nu])|\pi(a|s)], 
\end{align}
\begin{align}
     G_V=(V_\theta(f(s,\nu))-V(s))^2
\end{align}
Combining the PPO objective with $G_\pi$ and $G_V$ produces the \emph{data-regularized actor critic} or DrAC objective as follows:
\begin{align}
    \mathcal{L}_{\mathrm{DrAC}}(\theta) = \mathcal{L}_{\mathrm{PPO}}(\theta) - \alpha_r(G_\pi + G_V)
\end{align}
where $\alpha_r$ is the weight of the regularization term, another hyperparameter to consider. The results in \cite{ucb_drac} show that learning which data augmentation function $f$ to use (from a fixed set) and grid searching over the $\alpha_r$ parameter can achieve new state-of-the-art results in challenging Procgen environments. 
\newpage

\section{Theoretical Results}

We derive the theoretical proofs presented in the main paper.

\subsection{Time-varying EXP3 Multiple play (\textsc{tv.exp3.m})} \label{appendix_subsec:tv_exp3_m}

We present a new algorithm for parallel (or multiple play) multi-armed bandits in the time-varying
setting with adversarial feedback. Particularly, we extend the
multiple play EXP3 algorithm (or EXP3.M) \cite{uchiya2010algorithms},
to the time-varying setting where the unknown reward distribution of each arm can change arbitrarily, but the total number of change points is no more than $V$ times. We refer to Table \ref{table:notation} for the notations used in the proofs.

We summarize the \textsc{tv.exp3.m} in Algorithm \ref{alg:EXP3_MultiplePlay_S} -- this is complementary to the brief Algorithm \ref{alg:EXP3_MultiplePlay_S_short} in the main paper. In Algorithm \ref{alg:EXP3_MultiplePlay_S}, we maintain a set of probability vectors (step 9) for each arm which will be specified and weighted in the optimal way derived by the theory. Then, at each round we select a batch of $B$ arms for parallel evaluations (step 10), observe the reward (step 11) and update the model (step 12,13,14). The normalization steps 3-7 are to prevent from being biased toward a single arm which performs overwhelmingly well, thus overly exploiting.

\begin{table}
\caption{Notation used in the theoretical analysis.\label{table:notation}}
\begin{centering}
\scalebox{0.8}{

\begin{tabular}{ccc}
\toprule 
\textbf{Variable} & \textbf{Domain} & \textbf{Meaning} \tabularnewline
\midrule 
$C$ & $ \mathcal{N}$ & number of categories (number of arms)\tabularnewline
\midrule 
$T$ & $\mathcal{N}$ & maximum number of bandit update (the number of $t_{\mathrm{ready}}$  in PB2) \tabularnewline
\midrule 
$V$ & $\mathcal{N}$ & number of time segments, how many times the function has been shifted\tabularnewline
\midrule 
$B$ & $\mathcal{N}$ & batch size (number of parallel agents)\tabularnewline
\midrule 
$S_t$ & $list$ & a list of $B$ selected categories $[ c_{t,1}, c_{t,2},...,c_{t,B}]$\tabularnewline
\midrule 
$e$ & $\mathcal{R^+}$ & this is Euler's number $2.71828$\tabularnewline
\midrule 
$\left[p_{t}^{1},...p_{t}^{C}\right]$ &  $list$ & probability vector at iteration $t$ and category $c=1....C$\tabularnewline
\midrule 
$w_{c}$ & $\mathcal{R}$ & weight for categorical $c$ (or arm $c$)\tabularnewline
\midrule 
$W_{t}=\sum_{c=1}^{C}w_{c}$ & $\mathcal{R^+}$ & sum of the weight vector at iteration $t$ \tabularnewline
\midrule 
$c_{t,b}\in\{1...C\}$ & $\mathcal{N}$ & a selection at iteration $t$ at agent $b$\tabularnewline
\midrule 
$c_{t} =[c_{t,1},...,c_{t,B}] $ & $list$ & a batch of $B$ categorical selections at iteration $t$  \tabularnewline
\midrule 
$c_{t}^{*}\in\{1...C\}$ & $list$ & an optimal selection at iteration $t$ \tabularnewline
\midrule 
$A_v^{*}$ & $list$ & a list of $B$ elements taking the (same) optimal selection $c_t^*$ at iteration/segment $v$ \tabularnewline
\midrule 
$g_{t}(c)$ & $[0,1]$ & a gain (reward) occurred at iteration $t$ by pulling an arm $c$\tabularnewline
\midrule 
$\hat{g}_{t}(c)=\frac{g_{t}(c)}{p_{t}^{c}+\gamma}\mutinfo(c=h_{t})$ & $\mathcal{R^+}$ & a normalized gain\tabularnewline
\midrule 
$\alpha=\frac{1}{T},\eta=2\gamma=\sqrt{\frac{2\ln C}{CT}}$ & $\mathcal{R^+}$ & hyperparameters, set by Theorem \ref{thm_regret_TVEXP3M} \tabularnewline
\midrule 
$\gamma=\sqrt{\frac{\ln C}{2CT}}\in[0,1]$ & $\mathcal{R^+}$ & is the exploration parameter\tabularnewline
\midrule 
$G_{T}(c)=\sum_{t=1}^{T}g_{t}(c)$ & $\mathcal{R^+}$ & a total gain if we select an arm $c$ entirely\tabularnewline
\bottomrule
\end{tabular}}
\par\end{centering}

\end{table}

\begin{theorem} (Theorem \ref{thm_regret_TVEXP3M} in the main paper)
Let $T>0$, $C>0$, set $\alpha=\frac{1}{T}$ and $\gamma=\min\left\{ 1,\sqrt{\frac{C\ln(C/B)}{(e-1)BT}}\right\}$, we assume the reward distributions to change at arbitrary time instants, but the total number of change points is no more than $V$ times. The expected regret gained by \textsc{tv.exp3.m} in a batch satisfies the following sublinear regret bound 
\begin{align*}
\mathbb{E} \left[ R_{TB} \right]\le & \left[1+e+V\right] \sqrt{(e-1)\frac{CT}{B}\ln\frac{CT}{B}}.
\end{align*}

\end{theorem}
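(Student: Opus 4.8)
The plan is to adapt the potential-function analysis of the multiple-play algorithm EXP3.M \cite{uchiya2010algorithms} to the switching (time-varying) regime, in the spirit of the EXP3.S analysis of Auer et al. The natural potential is the total weight $W_{t}=\sum_{c=1}^{C}w_{c}$, and the entire argument revolves around sandwiching $\ln\frac{W_{T+1}}{W_{1}}$ from above and below.

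First I would establish the per-round upper bound on the potential increment. Writing the multiplicative update as $w_{c}\leftarrow w_{c}\exp(\eta\,\hat{g}_{t}(c))$ together with the $\alpha=\frac{1}{T}$ uniform mixing, I would expand $\ln\frac{W_{t+1}}{W_{t}}$ and apply $\exp(x)\le 1+x+(e-2)x^{2}$ for $x\le 1$ to the importance-weighted gains $\hat{g}_{t}(c)=\frac{g_{t}(c)}{p_{t}^{c}+\gamma}\mutinfo(c=h_{t})$. Summing over $t$ yields an upper bound in terms of $\eta\sum_{t}\sum_{c}p_{t}^{c}\hat{g}_{t}(c)$ plus an $(e-2)\eta^{2}$ variance term, where the exploration floor $\gamma$ and the capping/normalization (steps 3--7, needed so that no marginal selection probability $p_{t}^{c}$ exceeds $1$ when drawing $B$ distinct arms via DepRound) keep every term well defined.

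Next I would establish the comparator lower bound: for any fixed arm $c$, $\ln\frac{W_{T+1}}{W_{1}}\ge \eta\sum_{t}\hat{g}_{t}(c)-\ln C$, where the role of the $\alpha$-mixing is to prevent any $w_{c}$ from collapsing, so the comparator weight remains recoverable even after the reward distribution shifts. Taking expectations and using that DepRound makes $\hat{g}_{t}$ unbiased for $g_{t}$ — each arm $c$ is selected with marginal probability exactly $p_{t}^{c}$ \cite{gandhi2006dependent} — turns these two bounds into a stationary regret bound against the single best fixed arm; optimizing $\gamma=\min\{1,\sqrt{C\ln(C/B)/((e-1)BT)}\}$ balances the exploration bias against the $\eta^{2}$ variance and produces the leading $[1+e]$ factor, with a stationary $\ln\frac{C}{B}$ logarithmic term.

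Finally, for the time-varying bound I would partition $\{1,\dots,T\}$ into at most $V+1$ segments on which the optimal category $c_{t}^{*}$ is constant — possible because there are at most $V$ change points — and compare within each segment against its fixed optimal set $A_{v}^{*}$. The stationary argument applies on each segment, and the $\alpha=\frac{1}{T}$ mixing contributes the additive factor $V$ and, through the union over the $O(T)$ possible switch locations, the $\ln T$ that upgrades the stationary $\ln\frac{C}{B}$ to the stated $\ln\frac{CT}{B}$. Collecting the stationary constant $(1+e)$ with the $V$ switching terms then gives $[1+e+V]\sqrt{(e-1)\frac{CT}{B}\ln\frac{CT}{B}}$. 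I expect the main obstacle to be precisely this last step: verifying that the mixing is calibrated so each change point contributes only \emph{additively} (the bare factor $V$, not something worse), while simultaneously checking that the probability capping of EXP3.M breaks neither the telescoping of the potential nor the unbiasedness of $\hat{g}_{t}$ on which the earlier steps rely.
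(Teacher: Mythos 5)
Your proposal follows essentially the same route as the paper's proof: the EXP3.M potential $W_t$ bounded above via $e^x\le 1+x+(e-2)x^2$, a segment-wise comparator lower bound against the optimal batch $A_v^*$ (using that there are at most $V$ segments on which the optimum is constant), unbiasedness of $\hat g_t$ from DepRound, and the same tuning of $\gamma$ and $\alpha=1/T$. The only cosmetic difference is that the paper obtains the $\ln\frac{CT}{B}$ factor directly from the $\ln\frac{C}{B\alpha}$ term in the batch comparator bound --- which requires the geometric-mean step $\sum_{j\in A_v^*}\omega_j\ge B\left(\prod_{j\in A_v^*}\omega_j\right)^{1/B}$ rather than the single-arm $-\ln C$ bound you first state --- not from a union bound over switch locations.
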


 
\begin{proof}
We refer to Table \ref{table:notation} for notations. We follow the proof technique presented in \cite{auer2002nonstochastic} and \cite{uchiya2010algorithms} to derive the regret bound. Let $W_{t}=\sum_{c=1}^{C}\omega_{t}^{c}$ and using step 9 in Algorithm \ref{alg:EXP3_MultiplePlay_S}, we have $\frac{\frac{p_{t}^{c}}{B}-\frac{\gamma}{C}}{1-\gamma}=\frac{\omega_{t}^{c}}{W_{t}}$. This equation will be used below.
\small
\begin{align}
\frac{W_{t+1}}{W_{t}} & =\frac{\sum_{c=1}^{C}\omega_{t+1}^{c}}{W_{t}}=\frac{\sum_{c\notin S_{t}(0)}\omega_{t+1}^{c}}{W_{t}}+\frac{\sum_{c\in S_{t}(0)}\omega_{t+1}^{c}}{W_{t}}+\frac{\sum_{c=1}^{C}\frac{e\alpha W_{t}}{C}}{W_{t}}\nonumber \\
 & =\sum_{c\notin S_{t}(0)}\frac{\omega_{t}^{c}}{W_{t}}\times\exp\left(\frac{B\gamma}{C}\hat{g}_{t}(c)\right)+\frac{\sum_{c\in S_{t}(0)}\omega_{t}^{c}}{W_{t}}+e\alpha\nonumber \\
 & \le e\alpha+\sum_{c\notin S_{t}(0)}\frac{\omega_{t}^{c}}{W_{t}}\left[1+\frac{B\gamma}{C}\hat{g}_{t}(c)+(e-2)\left(\frac{B\gamma}{C}\right)^{2}\hat{g}_{t}^{2}(c)\right]+\frac{\sum_{c\in S_{t}(0)}\omega_{t+1}^{c}}{W_{t}}\thinspace\thinspace\thinspace\thinspace\thinspace\thinspace\textrm{by}\thinspace e^{x}\le1+x+(e-2)x^{2}\nonumber \\
 & =e\alpha+\underbrace{\sum_{c\notin S_{t}(0)}\frac{\omega_{t}^{c}}{W_{t}}+\sum_{c\in S_{t}(0)}\frac{\omega_{t+1}^{c}}{W_{t}}}_{1}+\sum_{c\notin S_{t}(0)}\frac{\frac{p_{t}^{c}}{B}-\frac{\gamma}{C}}{1-\gamma}\left[\frac{B\gamma}{C}\hat{g}_{t}(c)+(e-2)\left(\frac{B\gamma}{C}\right)^{2}\hat{g}_{t}^{2}(c)\right]\nonumber \\
 & \le e\alpha+1+\frac{B\gamma}{C\left(1-\gamma\right)}\sum_{c\notin S_{t}(0)}\left(\frac{p_{t}^{c}}{B}-\frac{\gamma}{C}\right)\hat{g}_{t}(c)+\frac{(e-2)}{1-\gamma}\left(\frac{B\gamma}{C}\right)^{2}\sum_{c\notin S_{t}(0)}\left(\frac{p_{t}^{c}}{B}-\frac{\gamma}{C}\right)\hat{g}_{t}^{2}(c)\nonumber \\
 & =e\alpha+1+\frac{\gamma}{C\left(1-\gamma\right)}\sum_{c\notin S_{t}(0)}p_{t}^{c}\hat{g}_{t}(c)-\frac{B\gamma^{2}}{C^{2}\left(1-\gamma\right)}\sum_{c\notin S_{t}(0)}\hat{g}_{t}(c)\nonumber \\
 & +\frac{(e-2)\gamma^{2}B}{\left(1-\gamma\right)C^{2}}\sum_{c\notin S_{t}(0)}p_{t}^{c}\hat{g}_{t}^{2}(c)-\frac{(e-2)\gamma^{3}B^{2}}{\left(1-\gamma\right)C^{2}}\sum_{c\notin S_{t}(0)}\hat{g}_{t}^{2}(c)\nonumber \\
 & \le e\alpha+1+\frac{\gamma}{C\left(1-\gamma\right)}\sum_{c\notin S_{t}(0)}p_{t}^{c}\hat{g}_{t}(c)+\frac{(e-2)\gamma^{2}B}{\left(1-\gamma\right)C^{2}}\sum_{c\notin S_{t}(0)}p_{t}^{c}\hat{g}_{t}^{2}(c)\nonumber \\
 & \le e\alpha+1+\frac{\gamma}{C\left(1-\gamma\right)}\sum_{c\in S_{t}-S_{t}(0)}g_{t}(c)+\frac{(e-2)\gamma^{2}B}{\left(1-\gamma\right)C^{2}}\sum_{c\notin S_{t}(0)}\hat{g}_{t}(c)\thinspace\thinspace\thinspace\thinspace\textrm{by}\thinspace x^{2}\le x,\forall x\in[0,1]\nonumber \\
\frac{W_{t+1}}{W_{t}} & \le e\alpha+1+\frac{\gamma}{C\left(1-\gamma\right)}\sum_{c\in S_{t}-S_{t}(0)}g_{t}(c)+\frac{(e-2)\gamma^{2}B}{\left(1-\gamma\right)C^{2}}\sum_{c\in[C]}\hat{g}_{t}(c)\label{eq:EXP3M_TV_Wt}.
\end{align}
\normalsize
To demonstrate the time-varying property in our bandit problem, we assume the reward distributions change at arbitrary time instants, but the total number of change points is no more than $V$ times. We split the sequence of total decisions $T$ into $V$ segments such that within each segment we have the \textit{time-invariant} reward function. 

We can write $V$ segments as $[T_{1},...T_{2}),[T_{2},....,T_{3}),[T_{V},...,T_{V+1})$
where $T_v$ indicates the starting index of the $v$-th segment, $T_{v+1}$ is the ending index of the $v$-th segment which is also the starting index of the $v+1$-th segment -- using the same notation in \textsc{exp3.s} \cite{auer2002nonstochastic}. Similarly, we can write the optimal sequence $\underbrace{[c^*_{T_1},...c^*_{T_2})}_{=c^*_{T_1}},\underbrace{[c^*_{T_2},....,c^*_{T_3})}_{=c^*_{T_2}},\underbrace{[c^*_{T_V},...,c^*_{T_{V+1}})}_{=c^*_{T_V}}$ where the reward function does not change in each segment, thus takes the same optimal choice $c^*_{T_v}$.


We consider an arbitrary segment $v$ and denote the length $\Delta_{v}=T_{v+1}-T_{v}$. Furthermore, let define the cumulative gain (or reward) achieved by using \textsc{tv.exp3.m} strategy within a segment $v$ that the indices are ranging from $T_v$ to $T_{v+1}-1$
\begin{align*}
G_{\textsc{tv.exp3.m}}(v) & =\sum_{t=T_{v}}^{T_{v+1}-1}\sum_{c\in S_{t}}g_{t}(h_{t}=c).
\end{align*}


Taking ln of Eq. (\ref{eq:EXP3M_TV_Wt}), we get
\begin{align*}
\ln\frac{W_{t+1}}{W_{t}} & \le\ln\left(e\alpha+1+\frac{\gamma/C}{1-\gamma}\sum_{c\in S_{t}-S_{t}(0)}g_{t}(c)+\frac{(e-2)\left(\frac{\gamma}{C}\right)^{2}B}{1-\gamma}\sum_{c=1}^{C}\hat{g}_{t}(c)\right)\\
 & \le e\alpha+\frac{\gamma/C}{1-\gamma}\sum_{c\in S_{t}-S_{t}(0)}g_{t}(c)+\frac{(e-2)\left(\frac{\gamma}{C}\right)^{2}B}{1-\gamma}\sum_{c=1}^{C}\hat{g}_{t}(c) \quad \quad \quad \,\,\,\,\,\,\,\,\,\,\textrm{by}\thinspace1+a\le e^{a}.
\end{align*}
Summing over all indices $t=T_{v},....,T_{v+1}-1$ within a segment $v$-th: 
\begin{align*}
\ln W_{T_{v+1}}-\ln W_{T_{v}} & \le\Delta_{v}e\alpha+\frac{\gamma/C}{1-\gamma}\sum_{t=T_{v}}^{T_{v+1}-1}\sum_{c\in S_{t}-S_{t}(0)}g_{t}(c)+\frac{(e-2)\left(\frac{\gamma}{C}\right)^{2}B}{1-\gamma}\sum_{t=T_{v}}^{T_{v+1}-1}\sum_{c=1}^{C}\hat{g}_{t}(c)\\
 & \le\Delta_{v}e\alpha+\frac{\gamma/C}{1-\gamma}\sum_{t=T_{v}}^{T_{v+1}-1}\sum_{c\in S_{t}-S_{t}(0)}g_{t}(c)+\frac{(e-2)\left(\frac{\gamma}{C}\right)^{2}B}{1-\gamma}\sum_{t=T_{v}}^{T_{v+1}-1}\sum_{c=1}^{C}\hat{g}_{t}(c).
\end{align*}
Let $j=c_{T_{v}}^{*}=...=c_{T_{v+1}-1}^{*}$ be the optimal choice in the sequence $v$, we have $\omega_{j}^{\left(T_{v+1}\right)}=\omega_{j}^{(T_{v+1}-1)}\times\exp\left(\frac{B\gamma}{C}\hat{g}_{t}(j)\right)+\frac{e\alpha}{C}W_{T_{v+1}-1},\forall j\notin S_{T_{v+1}-1}(0)$
and $\omega_{j}^{\left(T_{v+1}\right)}=\omega_{j}^{(T_{v+1}-1)},\forall j\in S_{T_{v+1}-1}(0)$
\begin{align*}
\omega_{j}^{\left(T_{v+1}\right)} & =\omega_{j}^{(T_{v+1}-1)}\times\exp\left(\frac{B\gamma}{C}\hat{g}_{t}(j)\right)+\frac{e\alpha}{C}W_{T_{v+1}-1}\\
 & \ge\omega_{j}^{(T_{v+1}-1)}\times\exp\left(\frac{B\gamma}{C}\hat{g}_{t}(j)\right)\\
 & \ge\omega_{j}^{(T_{v+1}-2)}\times\exp\left(\frac{B\gamma}{C}\hat{g}_{t}(j)\right)\exp\left(\frac{B\gamma}{C}\hat{g}_{\left(T_{v+1}-2\right)}(j)\right)\\
 & \ge\omega_{j}^{(T_{v})}\times\exp\left(\frac{B\gamma}{C}\sum_{t=T_{v}\mid t:j\notin S_{t}(0)}^{T_{v+1}-1}\hat{g}_{t}(j)\right)\\
 & \ge\frac{e\alpha}{C}W_{T_{v}}\times\exp\left(\frac{B\gamma}{C}\sum_{t=T_{v}\mid t:j\notin S_{t}(0)}^{T_{v+1}-1}\hat{g}_{t}(j)\right)\\
 & \ge\frac{\alpha}{C}W_{T_{v}}\times\exp\left(\frac{B\gamma}{C}\sum_{t=T_{v}\mid t:j\notin S_{t}(0)}^{T_{v+1}-1}\hat{g}_{t}(j)\right).
\end{align*}
We now consider the lower bound by taking the optimal (batch) set $A_{v}^{*}\subset[C]$
for $B$ elements with the maximum total of reward within the segment
$v$: $\sum_{c\in A_{v}^{*}}\sum_{t=1}^{T}g_{t}(c)$. Since we have
the fact that $\sum_{j\in A_{v}^{*}}\omega_{j}^{\left(T_{v+1}\right)}\ge B\left(\prod_{j\in A_{v}^{*}}\omega_{j}^{\left(T_{v+1}\right)}\right)^{1/B}$
by Cauchy Swatchz inequality, we continue the above formula
\begin{align*}
\ln\left(W_{T_{v+1}-1}\right)-\ln W_{T_{v}}\ge & \ln B+\frac{1}{B}\sum_{j\in A_{v}^{*}}\ln\omega_{j}^{\left(T_{v+1}\right)}-\ln W_{T_{v}}\\
= & \ln B+\frac{1}{B}\sum_{j\in A_{v}^{*}}\ln\frac{\alpha}{C}W_{T_{v}}+\frac{1}{B}\sum_{j\in A_{v}^{*}}\frac{B\gamma}{C}\sum_{t=T_{v}\mid t:j\notin S_{t}(0)}^{T_{v+1}-1}\hat{g}_{t}(j)-\ln W_{T_{v}}\\
= & \ln B+\frac{1}{B}\sum_{j\in A_{v}^{*}}\ln\frac{\alpha}{C}+\sum_{j\in A_{v}^{*}}\frac{\gamma}{C}\sum_{t=T_{v}\mid t:j\notin S_{t}(0)}^{T_{v+1}-1}\hat{g}_{t}(j)\\
= & \ln\frac{B\alpha}{C}+\sum_{j\in A_{v}^{*}}\frac{\gamma}{C}\sum_{t=T_{v}\mid t:j\notin S_{t}(0)}^{T_{v+1}-1}\hat{g}_{t}(j).
\end{align*}
Combining both the lower bound and upper bound, we get
\begin{align*}
\ln\frac{B\alpha}{C}+\sum_{j\in A_{v}^{*}}\frac{\gamma}{C}\sum_{t=T_{v}\mid t:j\notin S_{t}(0)}^{T_{v+1}-1}\hat{g}_{t}(j) & \le\Delta_{v}e\alpha+\frac{\gamma/C}{1-\gamma}\sum_{t=T_{v}}^{T_{v+1}-1}\sum_{c\in S_{t}-S_{t}(0)}g_{t}(c) \nonumber \\ +\frac{(e-2)\left(\frac{\gamma}{C}\right)^{2}B}{1-\gamma}\sum_{t=T_{v}}^{T_{v+1}-1}\sum_{c=1}^{C}\hat{g}_{t}(c).
\end{align*}
Summing over all segments $v=1....V$
\small
\begin{align}
V\ln\frac{B\alpha}{C}+\sum_{v=1}^{V}\sum_{j\in A_{v}^{*}}\frac{\gamma}{C}\sum_{t=T_{v}\mid t:j\notin S_{t}(0)}^{T_{v+1}-1}\hat{g}_{t}(j)\le & Te\alpha+\frac{\gamma/C}{1-\gamma}\sum_{v=1}^{V}\sum_{t=T_{v}}^{T_{v+1}-1}\sum_{c\in S_{t}-S_{t}(0)}g_{t}(c) \nonumber \\
 & +\frac{(e-2)\left(\frac{\gamma}{C}\right)^{2}B}{1-\gamma}\sum_{v=1}^{V}\sum_{t=T_{v}}^{T_{v+1}-1}\sum_{c=1}^{C}\hat{g}_{t}(c) \nonumber \\
\frac{VC}{\gamma}\ln\frac{B\alpha}{C}+\sum_{v=1}^{V}\sum_{j\in A_{v}^{*}}\sum_{t=T_{v}\mid t:j\notin S_{t}(0)}^{T_{v+1}-1}\hat{g}_{t}(j)\le & \frac{C}{\gamma}Te\alpha+\frac{1}{1-\gamma}\sum_{v=1}^{V}\sum_{t=T_{v}}^{T_{v+1}-1}\sum_{c\in S_{t}-S_{t}(0)}g_{t}(c) \nonumber \\
 & +\frac{(e-2)\left(\frac{\gamma B}{C}\right)}{1-\gamma}\sum_{v=1}^{V}\sum_{t=T_{v}}^{T_{v+1}-1}\sum_{c=1}^{C}\hat{g}_{t}(c) \nonumber \\
\sum_{v=1}^{V}\sum_{t=T_{v}\mid j\in S_{t}(0)}^{T_{v+1}-1}\sum_{j\in A_{v}^{*}}g_{t}(j)+\sum_{v=1}^{V}\sum_{j\in A_{v}^{*}}\sum_{t=T_{v}\mid j\notin S_{t}(0)}^{T_{v+1}-1}\hat{g}_{t}(j)\le & \frac{C}{\gamma}Te\alpha+\frac{1}{1-\gamma}\sum_{v=1}^{V}\sum_{t=T_{v}}^{T_{v+1}-1}\sum_{c\in S_{t}(0)}g_{t}(c) \nonumber \\
& + \frac{1}{1-\gamma}\sum_{v=1}^{V}\sum_{t=T_{v}}^{T_{v+1}-1}\sum_{c\in S_{t}-S_{t}(0)}g_{t}(c) \nonumber \\
 & +\frac{(e-2)\left(\frac{\gamma B}{C}\right)}{1-\gamma}\sum_{v=1}^{V}\sum_{t=T_{v}}^{T_{v+1}-1}\sum_{c=1}^{C}\hat{g}_{t}(c) \nonumber \\
 & -\frac{VC}{\gamma}\ln\frac{B\alpha}{C} \label{eq:tvexp3m_GTB}
 \end{align}
 \normalsize
 Let us denote the optimal gain over all iterations and all parallel agents  $G^*_{TB} = \sum_{v=1}^{V}\sum_{t=T_{v}}^{T_{v+1}-1}\sum_{j\in A_{v}^{*}}g_{t}(j)$. We also denote the cumulative gain achieved by using \textsc{tv.exp3.m} algorithms $G_{\textsc{tv.exp3.m}} = \sum_{v=1}^{V}\sum_{t=T_{v}}^{T_{v+1}-1}\sum_{c \in S_t} g_t(c).$  Then, we continue Eq. (\ref{eq:tvexp3m_GTB}) as
\begin{equation} \label{eq:tvexp3m_G_TB}
    G^*_{TB}\le \frac{C}{\gamma}Te\alpha+\frac{1}{1-\gamma}G_{\textsc{tv.exp3.m}}  +\frac{(e-2)\left(\frac{\gamma B}{C}\right)}{1-\gamma}\sum_{v=1}^{V}\sum_{t=T_{v}}^{T_{v+1}-1}\sum_{c=1}^{C}\hat{g}_{t}(c)-\frac{VC}{\gamma}\ln\frac{B\alpha}{C}   
\end{equation}  
where the number of element in a batch $|S_t| = |A^*_j| = B$. Let take expectation both sides of Eq. (\ref{eq:tvexp3m_G_TB}), we have $\mathbb{E}\left[\hat{g}_{t}(c)\mid S_1,S_2,...,S_{i-1}\right]=g_{t}(c)$
from the fact that DepRound \cite{gandhi2006dependent} selects action $c$ with probability
$p_{c}(t)$, we obtain
\begin{align*}
G^*_{TB} & \le\frac{1}{\left(1-\gamma\right)}\mathbb{E}\left[G_{\textsc{tv.exp3.m}}\right]+\frac{(e-2)\left(\frac{\gamma B}{C}\right)}{1-\gamma}\sum_{v=1}^{V}\sum_{t=T_{v}}^{T_{v+1}-1}\sum_{c=1}^{C}g_{t}(c)+\frac{C}{\gamma}Te\alpha-\frac{VC}{\gamma}\ln\frac{B\alpha}{C}.
\end{align*}

\begin{algorithm*}[t]
\caption{\textsc{tv.exp3.m} algorithm\label{alg:EXP3_MultiplePlay_S}}
\label{alg:tv_exp3_full}

\begin{algor}
\item [{{*}}] Input: $\gamma=\sqrt{\frac{C\ln(C/B)}{(e-1)BT}}$, $\alpha=\frac{1}{T},$$C$ \#categorical
choice, $T$ \#max iteration, $B$ \#multiple play
\end{algor}
\begin{algor}[1]
\item [{{*}}] Init $\omega_{c}=1,\forall c=1...C$ and denote $\eta=(\frac{1}{B}-\frac{\gamma}{C})\frac{1}{1-\gamma}$
\item [{for}] $t=1$ to $T$
\item [{if}] $\arg\max_{c\in[C]}\omega_{c}\ge\eta\sum_{c=1}^{C}\omega(c)$
\item [{{*}}] $\nu$ s.t. $\frac{\nu}{\eta}=\sum_{\omega_{t}(c)\ge \nu} \nu+\sum_{\omega_{t}(c)<\omega_{t}(c)}\omega_{t}(c)$
\item [{{*}}] Set $S_0=( c:\omega_{t}(c)\ge \nu )$
and $\omega_{t}(S_0)=\nu$ 
\item [{else}]~
\item [{{*}}] Set $S_0=\emptyset$
\item [{endif}]~
\item [{{*}}] Compute $p_{t}^{c}=B\left((1-\gamma)\frac{\omega_{c}}{\sum_{c=1}^{C}\omega_{c}}+\frac{\gamma}{C}\right),\forall c$
\item [{{*}}] $S_{t}=\textrm{DepRound}\left(B,\left[p_{t}^{1}p_{t}^{2}....p_{t}^{C}\right]\right)$
\item [{{*}}] Observe the reward $g_{t}(c)=f(h_{t}=c)$ for $c\in S_{t}$
\item [{{*}}] $\hat{g}_{t}(c)=\frac{g_{t}(c)}{p_{t}^{c}},\forall c\in S_{t}$
and $\hat{g}_{t}(c)=0$ otherwise
\item [{{*}}] $\forall c\notin S_0:$ update $\omega_{c}=\omega_{c}\times\exp\left(B\gamma\hat{g}_{t}(c)/C\right)+\frac{e\alpha}{C}\sum_{i=1}^{C}\omega_{c},$
\item [{{*}}] $\forall c=S_0: $ update  $\omega_{c}=\omega_{c}+\frac{e\alpha}{C}\sum_{i=1}^{C}\omega_{c}$
\item [{endfor}]~
\end{algor}
\begin{algor}
\item [{{*}}] Output: $\mathcal{D}_{T}$
\end{algor}
\end{algorithm*}

We finally have
\small
\begin{align}
(1-\gamma)G^*_{TB}\le & \mathbb{E}\left[G_{\textsc{tv.exp3.m}}\right]+(e-2)\left(\frac{\gamma B}{C}\right)\frac{C}{B}G^*_{TB}+\frac{(1-\gamma)}{\gamma}CTe\alpha-\frac{(1-\gamma)VC}{\gamma}\ln\frac{B\alpha}{C} \label{eq:EXP3MS_Gmax} \\
G^*_{TB}-\mathbb{E}\left[G_{\textsc{tv.exp3.m}}\right]\le & (e-1)\gamma G^*_{TB}+\frac{(1-\gamma)}{\gamma}CTe\alpha+\frac{(1-\gamma)VC}{\gamma}\ln\frac{C}{B\alpha}\nonumber \\
\le & (e-1)\gamma TB+\frac{(1-\gamma)}{\gamma}CTe\alpha+\frac{VC}{\gamma}\ln\frac{C}{B\alpha}-VC\ln\frac{C}{B\alpha}\nonumber \quad \quad \text{by } G^*_{TB}\le TB \\
\le & (e-1)\gamma TB+\frac{1}{\gamma}CTe\alpha+\frac{VC}{\gamma}\ln\frac{C}{B\alpha}.
\end{align}
\normalsize
In Eq. (\ref{eq:EXP3MS_Gmax}), we use the fact that $\sum_{v=1}^{V}\sum_{t=T_{v}}^{T_{v+1}-1}\sum_{c=1}^{C}g_{t}(c)\le\frac{C}{B}G^*_{TB}=\frac{C}{B}\sum_{v=1}^{V}\sum_{t=T_{v}}^{T_{v+1}-1}\sum_{c\in A^{*}}g_{t}(c)$.

Set $\gamma=\min\left\{ 1,\sqrt{\frac{C\ln(C/B)}{(e-1)BT}}\right\} $
and $\alpha=\frac{1}{T}$. Then, we have the cumulative regret over all iterations and considering the best element in a batch
\begin{align}
 \mathbb{E}\left[ R_{TB} \right]&=  \mathbb{E} \left[ \sum_{v=1}^{V}\sum_{t=T_{v}}^{T_{v+1}-1}  \overbrace{ \max_{ \forall c \le C}g_{t}(c) }^{\text{optimal arms}}\right] - \mathbb{E} \left[  \sum_{v=1}^{V}\sum_{t=T_{v}}^{T_{v+1}-1} \overbrace{ \max_{ \forall c \in S_t} g_t(c) }^{\text{best arm in a batch $S_t$}} \right]  \label{eq:R_T_1} \\
 & \le \frac{1}{B} G^*_{TB} - \frac{1}{B} \mathbb{E}\left[G_{\textsc{tv.exp3.m}}\right]  \label{eq:R_T_2} \\
 & \le \frac{1}{B}\sqrt{\frac{C (e-1)\ln(C/B)}{BT}}T+\frac{1}{B} \frac{\sqrt{C(e-1)BT}}{\sqrt{\ln(C/B)}}e+\frac{1}{B} \frac{VC}{\sqrt{\frac{C\ln(C/B)}{(e-1)BT}}}\ln\frac{CT}{B}\\
 & \le  \left[1+e+V\right] \sqrt{(e-1)\frac{CT}{B}\ln\frac{CT}{B}} \label{eq:R_T_3}
\end{align}

where we obtain Eq. (\ref{eq:R_T_2}) because the best gain should be greater than the average gain of a batch $ \sum_{v=1}^{V}\sum_{t=T_{v}}^{T_{v+1}-1}\max_{ \forall c \in S_t} g_t(c) \ge \frac{1}{B} \sum_{v=1}^{V}\sum_{t=T_{v}}^{T_{v+1}-1}\sum_{c \in S_t} g_t(c)$. 

Given the number of changing points in our reward function is bounded $V \ll \sqrt{T}$ and the number of category $C$ is a constant, our regret bound achieves sublinear regret rate with the number of iterations $T$, i.e., $\lim_{T \rightarrow\infty}\frac{ \mathbb{E}\left[ R_{TB} \right] }{T}=0$. 

\end{proof}

In the above derivation, $T$ refers to the number of bandit updates which is equivalent to the number of $t_{\mathrm{ready}}$  in PB2 setting \cite{pb2}.

\subsection{Theoretical Derivations for PB2-Mult} \label{appendix_subsec:Theory_PB2_D}

We adapt Lemma 1 in \cite{pb2} to handle categorical variables. We first restate some notations used in \cite{pb2}. Let $F_t^{c_t}(x_t)$ be an objective function under a given set of continuous hyperparameters $x_t$ and categorical variable $c_t$ at timestep $t$. An example of $F_t^{c_t}(x_t)$ could be the reward for a deep RL agent. When training for a total of $T$ steps, our goal is to maximize the final performance $F_T^{c_T}(x_T)$. We formulate this problem as optimizing the time-varying black-box reward function $f_t$, over $\mathcal{D}$.

\begin{lemma} \label{lem:maxreward_minregret}
Maximizing the final performance $F_T$ of a model with respect to a given continuous hyperparameter schedule  $\{x_t\}_{t=1}^T$ and a categorical hyperparameter schedule $\{c_t\}_{t=1}^T$ is equivalent to maximizing the time-varying black-box function $f_t(x_t)$ and minimizing the corresponding cumulative regret $r_t(x_t)$,
\begin{align}
\max F^{c_T}_T(x_T)= \max \sum^T_{t=1} f_{c_t}(x_t) =  \min \sum^T_{t=1} r_t(x_t).
\end{align}
\end{lemma}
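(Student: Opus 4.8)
The lemma states that maximizing final performance $F_T^{c_T}(x_T)$ equals maximizing $\sum_t f_{c_t}(x_t)$ which equals minimizing $\sum_t r_t(x_t)$. This is the categorical-variable extension of Lemma 1 in PB2. The key setup: $f_t$ represents the change in performance after training for $t_{\mathrm{ready}}$ steps, so $f_{c_t}(x_t) = F_t^{c_t}(x_t) - F_{t-1}^{c_{t-1}}(x_{t-1})$, and the regret is $r_t(x_t) = f^* - f_{c_t}(x_t)$ for some fixed optimal value $f^*$.

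**Proof Plan**

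My plan is to follow the telescoping argument from PB2, now carrying both the categorical index $c_t$ and the continuous parameter $x_t$ along. Let me sketch the three equalities in turn.

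First, for the equivalence $\max F_T^{c_T}(x_T) = \max \sum_{t=1}^T f_{c_t}(x_t)$, I would write the final performance as a telescoping sum. Since each $f_{c_t}(x_t)$ is defined as the incremental change $F_t^{c_t}(x_t) - F_{t-1}^{c_{t-1}}(x_{t-1})$, I would expand
\begin{align*}
\sum_{t=1}^T f_{c_t}(x_t) = \sum_{t=1}^T \left( F_t^{c_t}(x_t) - F_{t-1}^{c_{t-1}}(x_{t-1}) \right) = F_T^{c_T}(x_T) - F_0,
\end{align*}
where $F_0$ is the fixed initial performance. Because $F_0$ is a constant independent of the schedule, maximizing the cumulative sum is equivalent to maximizing $F_T^{c_T}(x_T)$. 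This is a direct analogue of the continuous-only telescoping in PB2, and is essentially routine once the definition of $f_{c_t}$ is fixed; the only care needed is the boundary treatment of the $t=1$ term and the definition of $F_0^{c_0}(x_0)$.

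**The regret equivalence and the main obstacle**

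Second, for $\max \sum_t f_{c_t}(x_t) = \min \sum_t r_t(x_t)$, I would substitute $r_t(x_t) = f^* - f_{c_t}(x_t)$ so that $\sum_{t=1}^T r_t(x_t) = T f^* - \sum_{t=1}^T f_{c_t}(x_t)$. Since $f^*$ is a fixed (schedule-independent) upper bound—the best achievable value over all categorical and continuous choices—$T f^*$ is constant, and minimizing the regret sum is exactly maximizing the reward sum. Here the subtle point, and the main obstacle, is pinning down exactly what $f^*$ means and arguing it is well-defined and schedule-independent: in the mixed-input setting it should be $f^* = \max_{c \in [C]} \max_{x \in \mathcal{D}} f_c(x)$, a joint optimum over the categorical and continuous spaces. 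I would need to state this definition explicitly (as is done in the commented-out PB2-integration section of the excerpt) and verify that it does not depend on $t$ in the time-invariant-within-segment regime, so that the constant $T f^*$ genuinely factors out. Once $f^*$ is fixed as a constant, the two optimization problems have identical argmax/argmin over the schedule $\{(c_t, x_t)\}_{t=1}^T$, completing the chain of equalities and hence the lemma.
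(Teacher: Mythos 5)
Your proposal is correct and follows essentially the same route as the paper: a telescoping expansion of $F_T^{c_T}(x_T)$ minus a constant initial performance to get the first equality, then subtracting from the (schedule-independent) optimal value $f_{c_t^*}(x_t^*)$ to convert reward maximization into regret minimization. The paper likewise treats the optimum as "an unknown constant," so your extra care about pinning down $f^*$ only makes explicit what the paper leaves implicit.
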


\begin{proof}

At each $t_{\mathrm{ready}}$, we select a categorical variable $c_t \in \{1,...,C\}$ and a continuous variable $x_t \in \mathcal{D}^{c_t}$. We would emphasize that the continuous variables are conditioned on the choice of $c_t$. We observe and record noisy observations, $y_t = f_{c_t}(x_t) + \epsilon_t$, where $\epsilon_t \sim \mathcal{N}(0, \sigma^2\mathbf{I})$ for some fixed $\sigma^2$. The function $f_t$ represents the change in $F_t$ after training for $t_{\mathrm{ready}}$ steps, i.e. $f_{c_t}(x_t)=F^{c_t}_t(x_t) - F_{t-t_\mathrm{ready}}^{c_{t-t_\mathrm{ready}}}(x_{t-t_\mathrm{ready}})$. 
We define the best choice at each timestep as $x_t^* = \arg \max_{x_t\in\mathcal{D}^{c_t}}f_{c_t}(x_t)$, and $c_t^* = \arg \max_{c_t, x_t}f_{c_t}(x_t)$. The intermediate regret of each decision is defined as $r_t = f_{c^*_t}(x_t^*) - f_{c_t}(x_t)$ where $f_{c^*_t}(x_t^*)$ is an unknown constant.

We have a reward at the starting iteration $F^{c_1}_1(x_1)$ as a constant that allows us to write the objective function as:
\small
\begin{align}
  F^{c_T}_{T}(x_{T})-F^{c_1}_{1}(x_{1})= & \underbrace{F_{T}^{c_{T}}(x_{T})-F_{T-1}^{c_{T-1}}(x_{T-1})}_{f_{c_{T-1}}(x_{T-1})}+...+ \underbrace{F_{3}^{c_{3}}(x_{3})-F_{2}^{c_{2}}(x_{2})}_{f_{c_2}(x_2)}+ \underbrace{ F_{2}^{c_{2}}(x_{2})-F_{1}^{c_{1}}(x_{1})}_{f_{c_1}(x_1)}.
    \label{eq:equivalent_reward_regret}
\end{align}
\normalsize
Therefore, maximizing the left of Eq. (\ref{eq:equivalent_reward_regret}) is equivalent to minimizing the cummulative regret as follows:
\small
\begin{align}
    \max \left[ F^{c_T}_T(x_T) - F^{c_1}_1(x_1) \right] &=  \max \sum_{t=1}^T F^{c_t}_t(x_t) - F^{c_{t-1}}_{t-1}(x_{t-1}) = \max \sum_{t=1}^{T-1} f_{c_t}(x_t) = \min \sum_{t=1}^{T-1} r_t(x_t)\nonumber
\end{align}
\normalsize
where we define $f_{c_t}(x_t)=F^{c_t}_t(x_t)-F^{c_{t-1}}_{t-1}(x_{t-1})$, the regret $r_t=f_{c^*_t}(x^*_t)-f_{c_t}(x_t)$.
\end{proof}

We then restate Theorem 2 in \cite{pb2} which will be used in deriving the convergence guarantee for PB2-Mult.

\begin{theorem} (Theorem 2 in \cite{pb2})
Let the domain $\mathcal{D}\subset[0,r]^{d}$ be compact and convex where $d$ is the dimension and suppose that the kernel is such that $f_t \sim GP(0,k)$ is almost surely continuously differentiable and satisfies Lipschitz assumptions $\forall L_t \ge0,t\le\mathcal{T}, \forall j\le d, p(\sup\left|\frac{\partial f_{t}(\bx)}{\partial\bx^{(j)}}\right|\ge L_t)\le ae^{-\left(L_t/b\right)^{2}}$ for some $a,b$. Pick $\delta\in(0,1)$, set $\beta_{T}=2\log\frac{\pi^{2}T^{2}}{2\delta}+2d\log rdbT^{2}\sqrt{\log\frac{da\pi^{2}T^{2}}{2\delta}}$ and define $C_{1}=32/\log(1+\sigma_{f}^{2})$, the PB2 algorithm satisfies the following regret bound after $T$ time steps over $B$ parallel agents with probability at least $1-\delta$:
\begin{align*}
R_{TB}=\sum_{t=1}^{T}f_{t}(\bx_{t}^{*})-\max_{b \le B} f_{t}(\bx_{t,b})\le \sqrt{C_{1}T\beta_{T}\left(\frac{T}{\tilde{N}B}+1\right)\left(\gamma_{\tilde{N}B}+\left[\tilde{N}B\right]^{3}\omega\right)}+2
\end{align*}
the bound holds for any block length
$\tilde{N}\in\left\{ 1,...,T\right\} $ and $B\ll T$.
\end{theorem}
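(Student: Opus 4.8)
The plan is to recognize this as the regret analysis for time-varying \emph{batch} GP bandits (GP-BUCB), so I would assemble three established ingredients and fuse them through a single block decomposition of length $\tilde N$: (i) a confidence-bound argument that produces the exploration constant $\beta_T$, (ii) the batch ``hallucination'' factor of \cite{gp_bucb}, and (iii) the time-varying drift control of \cite{bogunovic2016time}. The whole argument lives on a single high-probability event and ends with one Cauchy--Schwarz pass.

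First I would establish the confidence bounds. Because $\mathcal{D}\subset[0,r]^d$ is compact and the sample paths are a.s.\ continuously differentiable with the stated Gaussian-tail Lipschitz condition, I would, for each $t$, place a grid $\mathcal{D}_t$ of $(\tau_t)^d$ points and choose $\tau_t$ growing polynomially in $t$ so that the discretization error $|f_t(\bx)-f_t([\bx]_t)|$ is controlled by $1/t^2$ on the event that all $d$ partial derivatives are bounded. A union bound of the Gaussian tail $\Pr(|f_t(\bx)-\mu_{t-1}(\bx)|>\sqrt{\beta_t}\,\sigma_{t-1}(\bx))$ over the finite grid, over the $d$ Lipschitz events, and over all $t$ --- each allotted $O(\delta/t^2)$ failure probability --- forces precisely the quoted value of $\beta_T$ and yields, with probability at least $1-\delta$, the two-sided bound $|f_t(\bx)-\mu_{t-1}(\bx)|\le\sqrt{\beta_t}\,\sigma_{t-1}(\bx)$ simultaneously for all $\bx$ and $t$. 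Conditioned on this event, the UCB acquisition rule of Eqn.~(\ref{eq:gp_bucb_acq}) gives the instantaneous bound $r_t\le 2\sqrt{\beta_t}\,\sigma_{t-1}(\bx_t)+1/t^2$, where the summable $1/t^2$ terms ultimately contribute the additive constant $2$.

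Next I would bring in the batch and time-varying structure through the block reformulation. Following \cite{gp_bucb}, selecting the $b$-th point in a batch via hallucinated (variance-only) updates inflates the effective variance relative to the fully updated posterior; I would bound this inflation and, crucially, treat a whole block of $\tilde N$ consecutive rounds --- i.e.\ $\tilde N B$ evaluations --- as one extended batch, which is exactly what produces the prefactor $(T/(\tilde N B)+1)$. Within each block I would import the estimate of \cite{bogunovic2016time} showing that replacing the drifting function by a frozen one over $\tilde N B$ steps perturbs the posterior variance by a term scaling as $[\tilde N B]^3\omega$; added to the single-block maximal information gain $\gamma_{\tilde N B}$, this is the origin of $(\gamma_{\tilde N B}+[\tilde N B]^3\omega)$. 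The final step is Cauchy--Schwarz: $R_{TB}=\sum_t r_t\le\sqrt{T\sum_t r_t^2}$, where $\sum_t\sigma_{t-1}^2(\bx_t)$ is converted to information gain via the elementary inequality $s^2\le C_1\log(1+\sigma_f^{-2}s^2)$ with $C_1=32/\log(1+\sigma_f^2)$, delivering the stated square-root expression.

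The hard part will be composing the two orthogonal extensions --- batching and time variation --- cleanly under one block length $\tilde N$, since the hallucination analysis and the drift analysis each control a different quantity (conditional information gain \emph{within} a batch versus posterior-variance perturbation \emph{across} time). I would need to verify that $\beta_T$ remains a valid confidence multiplier when the variances are computed from stale, hallucinated data rather than the true posterior, and to check that $\tilde N$ can be chosen freely in $\{1,\dots,T\}$ without breaking either estimate, so that the bound holds uniformly over $\tilde N$ as claimed. Optimizing $\tilde N$ and $\omega$ afterwards, as in the asymptotic remark following the theorem, is then routine.
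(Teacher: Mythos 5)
You should first note a mismatch of scope: this statement is not proved in the paper at all. It is Theorem 2 of \cite{pb2}, restated verbatim in the appendix purely as an imported ingredient for the proof of the PB2-Mult bound (Theorem \ref{thm:PB2-D}), where it is invoked as a black box to control the term $\sum_{t=1}^{T} f_t^{*}-\max_{b\le B}\sum_{t=1}^{T} f_{c_t^{*}}(x_{t,b})$. So there is no in-paper proof to compare against, and a faithful answer in the context of this paper would simply be a citation. That said, your reconstruction does match the architecture of the actual proof in the cited source: confidence bounds with the discretization/union-bound argument under the Gaussian-tail Lipschitz assumption (yielding the stated $\beta_T$ and the summable $1/t^2$ terms behind the additive $+2$), the hallucinated-variance batch mechanism of \cite{gp_bucb}, the block decomposition with per-block information gain $\gamma_{\tilde N B}$ plus drift penalty $[\tilde N B]^{3}\omega$ from \cite{bogunovic2016time}, and a final Cauchy--Schwarz plus information-gain conversion producing $C_1 = 32/\log(1+\sigma_f^2)$. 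Your own flagged caveat is the right one and deserves emphasis: the validity of $\beta_T$ as a confidence multiplier when variances come from hallucinated rather than fully updated posteriors is not automatic; in \cite{gp_bucb} it is secured by bounding the ratio of fed-back to hallucinated posterior standard deviations (at the cost of a constant absorbed into $C_1$, or an initialization phase), and any complete write-up must include that ratio lemma explicitly rather than assume the sequential $\beta_T$ transfers unchanged. With that lemma supplied, and with the bookkeeping that the number of blocks is $\lceil T/\tilde N\rceil$ while each block contributes $\gamma_{\tilde N B}+[\tilde N B]^{3}\omega$ over its $\tilde N B$ evaluations (which is what produces the $\bigl(\tfrac{T}{\tilde N B}+1\bigr)$ prefactor after the per-agent normalization), your outline fills in to the proof given in \cite{pb2}; nothing in it contradicts how the present paper uses the result.
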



Next, we are going to derive the main theorem of the paper.
\begin{theorem} (Theorem \ref{thm:PB2-D} in the main paper) \label{thm_appendix:PB2-D}
Let the domain for continuous variables $\mathcal{D}\subset[0,r]^{d}$ be compact and convex where $d$ is the dimension and suppose that the kernel is such that $f_t \sim GP(0,k)$ is almost surely continuously differentiable and satisfies Lipschitz assumptions $\forall L_t \ge0,t\le\mathcal{T}, \forall j\le d ,p(\sup\left|\frac{\partial f_{t}(\bx)}{\partial\bx^{(j)}}\right|\ge L_t)\le ae^{-\left(L_t/b\right)^{2}}$ for some $a,b$. 

Assume the reward distributions to change at arbitrary time instants, but
the total number of change points is no more than $V$. Set $\alpha=\frac{1}{T}$, $\gamma=\min\left\{ 1,\sqrt{\frac{C\ln(C/B)}{(e-1)BT}}\right\}$, $\beta_{T}=2\log\frac{\pi^{2}T^{2}}{2\delta}+2d\log rdbT^{2}\sqrt{\log\frac{da\pi^{2}T^{2}}{2\delta}}$, pick $\delta\in(0,1)$ and define $C_{1}=32/\log(1+\sigma_{f}^{2})$, the PB2-Mult algorithm satisfies the following regret bound after $T$ time steps over $B$ parallel agents with probability at least $1-\delta$:
\small
\begin{align*}
\mathbb{E}\left[  R_{TB} \right] & \le \left[1+e+V\right] \sqrt{(e-1)\frac{CT}{B}\ln\frac{CT}{B}} + \sqrt{C_{1} T_{c^*_t} \beta_{T_{c^*_t}}\left(\frac{T_{c^*_t}}{\tilde{N}B}+1\right)\left(\gamma_{\tilde{N}B}+\left[\tilde{N}B\right]^{3}\omega\right)}+2.
\end{align*}
\normalsize
The bound holds for any $\tilde{N} \in \{1,...,T_{c^*_t}\}$ and $V \ll \sqrt{T}$.
\end{theorem}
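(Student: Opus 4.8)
The plan is to derive the bound by splitting the per-round regret of PB2-Mult into a \emph{categorical} part, controlled by Theorem \ref{thm_regret_TVEXP3M} for TV.EXP3.M, and a \emph{continuous} part, controlled by PB2's Theorem 2, and then simply adding the two guarantees. First I would invoke Lemma \ref{lem:maxreward_minregret} to reduce maximizing the final reward to minimizing the cumulative regret $\mathbb{E}[R_{TB}] = \mathbb{E}\sum_{t=1}^T r_t$, where $r_t = f_{c^*_t}(x^*_t) - f_{c_t}(x_t)$ and $f^* := f_{c^*_t}(x^*_t)$ denotes the global optimum. Introducing the intermediate quantity $\max_{c\in[C]} f_c(x_t)$ — the value of the best category at the continuous point $x_t$ that was actually played — I would telescope
\begin{align*}
R_{TB} = \sum_{t=1}^T \Big[ \max_{c\in[C]} f_c(x_t) - f_{c_t}(x_t) \Big] + \sum_{t=1}^T \Big[ f^* - \max_{c\in[C]} f_c(x_t) \Big].
\end{align*}

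The first sum is precisely the adversarial multiple-play bandit regret of TV.EXP3.M: the continuous iterates $x_t$ are generated by a separate GP routine that the bandit does not control, so they may be treated as an adversary, while the arm gains $g_t(c) = f_c(x_t) \in [0,1]$ satisfy the boundedness hypothesis of Theorem \ref{thm_regret_TVEXP3M}. Applying that theorem directly bounds this sum in expectation by $[1+e+V]\sqrt{(e-1)\frac{CT}{B}\ln\frac{CT}{B}}$, which is the leading term of the claimed inequality.

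For the second sum — the continuous regret toward the global optimum — I would use the key structural property of PB2-Mult, namely that it fits a \emph{separate} time-varying GP surrogate per category. Consequently the continuous optimization that actually drives $\max_c f_c(x_t)$ toward $f^*$ only accrues on the rounds where the optimal category $c^*$ is selected; writing $T_{c^*_t}$ for the number of such rounds, this sum is a PB2 continuous regret over $T_{c^*_t}$ effective time steps for the optimal-category model with $B$ parallel agents. I would then apply PB2's Theorem 2 verbatim with $T$ replaced by $T_{c^*_t}$ and any block length $\tilde N \in \{1,\dots,T_{c^*_t}\}$, giving $\sqrt{C_1 T_{c^*_t}\beta_{T_{c^*_t}}(\tfrac{T_{c^*_t}}{\tilde N B}+1)(\gamma_{\tilde N B} + [\tilde N B]^3\omega)} + 2$, which holds with probability at least $1-\delta$. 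Adding the two bounds yields the stated inequality. For sublinearity I would note that Theorem \ref{thm_regret_TVEXP3M} forces the optimal arm to be pulled increasingly often, so $T_{c^*_t}\to\infty$ as $T\to\infty$; together with $V\ll\sqrt{T}$ and $\omega\to 0$ this makes both terms $o(T)$ and gives $\lim_{T\to\infty}\mathbb{E}[R_{TB}]/T = 0$.

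The main obstacle I anticipate is making the replacement of $T$ by $T_{c^*_t}$ in the continuous term rigorous. This requires (i) arguing that the continuous regret relevant to the global optimum is genuinely confined to the optimal-category submodel, with the regret of all other categories absorbed into the bandit term, and (ii) coupling the \emph{random} count $T_{c^*_t}$ of optimal-arm pulls coming from the bandit analysis with the sample size entering the GP confidence interval, while keeping a consistent accounting of the $B$ parallel agents and the forgetting parameter $\omega$ across the two otherwise-independent analyses. Establishing $T_{c^*_t}\to\infty$ (so the GP bound remains meaningful) is the crux, and follows from the sublinear bandit regret of Theorem \ref{thm_regret_TVEXP3M}.
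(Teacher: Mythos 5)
Your proposal matches the paper's own argument essentially step for step: the same decomposition via the intermediate quantity $f_{c^*_t}(x_t)$ (best category at the played continuous point), the same application of Theorem \ref{thm_regret_TVEXP3M} to the categorical term with $x_t$ treated as adversarial, the same invocation of PB2's Theorem 2 with $T$ replaced by $T_{c^*_t}$ for the continuous term, and the same asymptotic argument for sublinearity. The obstacle you flag — rigorously justifying the substitution of the random count $T_{c^*_t}$ into the GP bound — is indeed glossed over in the paper as well, which simply assumes the optimal arm is identified by TV.EXP3.M.
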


\begin{proof}

We expand the cumulative regret and optimize it using time-varying GP bandit optimization \cite{bogunovic2016time,pb2}
\begin{align*}
R_{TB} & =  \sum_{t=1}^{T}  f^{*} - \max_{b=1...B} \sum_{t=1}^{T} f_{c_{t,b}}(x_{t,b}) \\
& = \max_{b=1,...,B} \sum_{t=1}^{T}  f_{c_{t,b}^{*}}(x_{t,b})- \max_{b=1,...,B} \sum_{t=1}^{T} f_{c_{t,b}}(x_{t,b}) + \sum_{t=1}^{T}  f^{*}- \max_{b=1,...,B} \sum_{t=1}^{T} f_{c_{t,b}^{*}}(x_{t,b}) \\
\end{align*}
where $b\in \{1,...,B\}$ is an agent's index being trained in parallel, $f_{c_{t}^{*}}(x_{t})=\max_{\forall c_{t}\in \{1,...,C\}}f_{c_{t}}(x_{t})$ and $c_{t}^{*}=\arg\max_{\forall c\in\{1,...C\}}f_{c}(x_{t})$ is
the optimal categorical choice at iteration $t$.


We bound the two terms separately as follows. The first term is bounded by Theorem \ref{thm_regret_TVEXP3M}. We assume the process of generating the arm $c_{t}$'s reward $f_{c_{t}}(.):=f_{c_{t}}(x_{t})$ is by the ``adversary'' that \textsc{tv.exp3.m} does not have the direct control on the selection of $x_{t}$. Particularly, $x_{t}$ will be chosen by \textsc{tv-gp-bucb} (as part of PB2) in Eqn. (\ref{eq:gp_bucb_acq}). We take the expectation of the first term to have
\small
\begin{align*}
\mathbb{E}\left[ \max_{b=1...B} \sum_{t=1}^{T} \underbrace{f_{c_{t,b}^{*}}(.)}_\text{pull\,optimal\,arm} \right]- \mathbb{E}\left[ \max_{b=1...B} \sum_{t=1}^{T}  \underbrace{f_{c_{t,b}}(.)}_\text{pull\,arm\,$c_{t}$} \right] &= \mathbb{E}\left[ \sum_{t=1}^{T} \underbrace{f_{c_{t}^{*}}(.)}_\text{pull\,optimal\,arm} \right]- \mathbb{E}\left[ \max_{b=1...B} \sum_{t=1}^{T}  \underbrace{f_{c_{t,b}}(.)}_\text{pull\,arm\,$c_{t}$} \right] \\
& = \mathbb{E}\left[ \tilde{R}_{TB}\right] \\
 & \le \left[1+e+V\right] \sqrt{(e-1)\frac{CT}{B}\ln\frac{CT}{B}}
\end{align*}
\normalsize
where $R_{TB}$ is cumulative regret of the \textsc{tv.exp3.m}, defined in Theorem \ref{thm_regret_TVEXP3M}.

Assuming the best arm (the best categorical choice) $c^*_t$ can be identified by \textsc{tv.exp3.m} in Theorem \ref{thm_regret_TVEXP3M}. The second term is the regret bound presented in Theorem 2 of the PB2 \cite{pb2}:
\begin{align*}
\sum_{t=1}^{T} f_t^{*}- \max_{b=1...B} \sum_{t=1}^{T} f_{c_{t,b}^{*}}(x_{t,b}) = & \sum_{t=1}^{T} f_t^{*}- \max_{b=1...B} \sum_{t=1}^{T} f_{c_{t}^{*}}(x_{t,b}) \\
&  \le \underbrace{ \sqrt{C_{1} T_{c^*_t} \beta_{T_{c^*_t}}\left(\frac{T_{c^*_t}}{\tilde{N}B}+1\right)\left(\gamma_{\tilde{N}B}+\left[\tilde{N}B\right]^{3}\omega\right)}+2}_{\mathcal{O}(PB2)}
\end{align*}
where $T_{c^*_t}$ denotes the number of times the optimal category $c^*_t$  (or optimal arm) is selected.
We follow \cite{bogunovic2016time,pb2} to denote a block length $\Tilde{N}\in \{1,....T_{c^*_t}\}$ in which the function does not change significantly. In the above equation, $\omega \in [0,1]$ is a time-varying hyperparameter which is estimated by maximizing the GP log marginal likelihood, $B$ is a batch size or the number of population agents, $\gamma_{\tilde{N}B}$ is the maximum information gain \cite{gp_ucb} defined within a block $\Tilde{N}$ over $B$ parallel agents. 

Note that the theoretical result for PB2 comes with the additional smoothness assumption on the kernel $k$  that holds for some $\left(a,b\right)$ and
$\forall L_t\ge0$. The kernel satisfies for all dimensions $j=1,...,d$
\begin{align}
\forall L_t & \ge0,t\le\mathcal{T},p(\sup\left|\frac{\partial f_{t}(\bx)}{\partial\bx^{(j)}}\right|\ge L_t)\le ae^{-\left(L_t/b\right)^{2}}.
\end{align}
We combine the two terms to obtain the final regret bound of the PB2-Mult algorithm:
\small
\begin{align*}
\mathbb{E}\left[  R_{TB} \right] & \le \left[1+e+V\right] \sqrt{(e-1)\frac{CT}{B}\ln\frac{CT}{B}} + \sqrt{C_{1} T_{c^*_t} \beta_{T_{c^*_t}}\left(\frac{T_{c^*_t}}{\tilde{N}B}+1\right)\left(\gamma_{\tilde{N}B}+\left[\tilde{N}B\right]^{3}\omega\right)}+2.
\end{align*}
\normalsize
The final regret bound is a summation of two sub-linear terms. Therefore, the regret bound grows sublinearly with the number of iterations $T$, i.e., $\lim_{T \rightarrow\infty}\frac{ \mathbb{E}\left[ R_{TB} \right] }{T}=0$. This is under a common assumption \cite{bogunovic2016time,pb2} that the time-varying function is correlated, i.e., we have  $\omega \rightarrow 0$, thus $\tilde{N} \rightarrow T$. We also note that when $T \rightarrow \infty$, then $T_{c^*_T} \rightarrow\infty$ due to Theorem \ref{thm_regret_TVEXP3M}.
\end{proof}


\subsection{Gradients}
\label{sec:kernel_gradients}

We optimize the GP hyperparameters by maximizing the log marginal likelihood \cite{RasmussenGP}. We fit the GP hyperparameters by maximizing their posterior probability (MAP), $p\left(\sigma_{x},\sigma_{t}\mid\bX,\mathbf{t},\by\right)\propto p\left(\sigma_{x},\sigma_{t},\bX,\mathbf{t},\by\right)$, which, thanks to the Gaussian likelihood, is available in closed form  
\begin{align}
\mathcal{L}:=\ln p\left(\by,\bX,\mathbf{t},\theta\right)= & \frac{1}{2}\by^{T}\left(\bK+\sigma_{y}^{2}\idenmat_{N}\right)^{-1}\by-\frac{1}{2}\ln\left|\bK+\sigma_{y}^{2}\idenmat_{N}\right|+\ln p_{hyp}\left(\theta\right)+\textrm{const}\label{eq:MarginalLLK}
\end{align}

 where $\idenmat_{N}$ is the identity matrix in dimension $N$ (the number of points in the training set), and $p_{hyp}(\theta)$ is the prior over hyperparameters. We optimize Eq. (\ref{eq:MarginalLLK}) with a gradient-based optimizer, providing the analytical gradient to the algorithm.

Our time-varying CoCaBO kernel is defined as
\begin{align*}
k_{z}(z,z') & =(1-\lambda)\left(k_{xt}+k_{ht}\right)+\lambda k_{xt}k_{ht}
\end{align*}
where $k_{xt}=k_{\textrm{Continuous}}(x,x')\times k_{\textrm{time1}}(t,t')$,
$k_{ht}=k_{\textrm{Categorical}}(h,h')\times k_{\textrm{time2}}(t,t')$,
$k_{\textrm{Continuous}}(x,x')=\sigma_{1}\times\exp\left(-\frac{||x-x'||^{2}}{l}\right)$,
$k_{\textrm{Categorical}}(h,h')=\frac{\sigma_{2}}{C}\sum\mutinfo(h,h')$,
$k_{\textrm{time1}}(t,t')=(1-\epsilon_{1})^{\frac{|t-t'|}{2}}$ and
$k_{\textrm{time2}}(t,t')=(1-\epsilon_{2})^{\frac{|t-t'|}{2}}$.

Our hyperparameters include $\theta=\left\{ \epsilon_{1},\epsilon_{2},l,\sigma_{1},\sigma_{2},\lambda\right\} $.
We need to compute the gradients of $\frac{\partial\mathcal{L}}{\partial\epsilon_{1}},\frac{\partial\mathcal{L}}{\partial\epsilon_{2}},\frac{\partial\mathcal{L}}{\partial l},\frac{\partial\mathcal{L}}{\partial\sigma_{1}},\frac{\partial\mathcal{L}}{\partial\sigma_{2}},\frac{\partial\mathcal{L}}{\partial\lambda}$
as follows:

\begin{itemize}
\item The gradient of $\frac{\partial\mathcal{L}}{\partial\epsilon_{1}}$
\begin{align*}
\frac{\partial\mathcal{L}}{\partial\epsilon_{1}} & =\frac{\partial\mathcal{L}}{\partial k_{z}}\times\frac{\partial k_{z}}{\partial\epsilon_{1}}\\
\frac{\partial k_{z}}{\partial\epsilon_{1}} & =\left(1-\lambda\right)k_{\textrm{Continuous}}(x,x')\frac{\partial k_{\textrm{time1}}}{\partial\epsilon_{1}}+\lambda k_{ht}k_{\textrm{Continuous}}(x,x')\frac{\partial k_{\textrm{time1}}}{\partial\epsilon_{1}}\\
\frac{\partial k_{\textrm{time1}}}{\partial\epsilon_{1}} & =-\frac{|t-t'|}{2}\left(1-\epsilon_{1}\right)^{\frac{|t-t'|}{2}-1}
\end{align*}
\item The gradient of $\frac{\partial\mathcal{L}}{\partial\epsilon_{2}}$
\begin{align*}
\frac{\partial\mathcal{L}}{\partial\epsilon_{2}} & =\frac{\partial\mathcal{L}}{\partial k_{z}}\times\frac{\partial k_{z}}{\partial\epsilon_{2}}\\
\frac{\partial k_{z}}{\partial\epsilon_{2}} & =\left(1-\lambda\right)k_{\textrm{Categorical}}(h,h')\frac{\partial k_{\textrm{time2}}}{\partial\epsilon_{2}}+\lambda k_{xt}k_{\textrm{Categorical}}(h,h')\frac{\partial k_{\textrm{time2}}}{\partial\epsilon_{2}}\\
\frac{\partial k_{\textrm{time2}}}{\partial\epsilon_{2}} & =-\frac{|t-t'|}{2}\left(1-\epsilon_{2}\right)^{\frac{|t-t'|}{2}-1}
\end{align*}
\item The gradient of $\frac{\partial\mathcal{L}}{\partial l}$
\begin{align*}
\frac{\partial\mathcal{L}}{\partial l}= & \frac{\partial\mathcal{L}}{\partial k_{z}}\times\frac{\partial k_{z}}{\partial l}\\
\frac{\partial k_{z}}{\partial l}= & \left(1-\lambda\right)k_{\textrm{time1}}(t,t')\frac{\partial k_{\textrm{Continuous}}}{\partial l}+\lambda k_{ht}k_{\textrm{time1}}(t,t')\frac{\partial k_{\textrm{Continuous}}}{\partial l}\\
\frac{\partial k_{\textrm{Continuous}}}{\partial l}= & \frac{||x-x'||^{2}}{l^{2}}k_{\textrm{Continuous}}(x,x')
\end{align*}
\item The gradient of $\frac{\partial\mathcal{L}}{\partial\sigma_{1}}$
\begin{align*}
\frac{\partial\mathcal{L}}{\partial\sigma_{1}}= & \frac{\partial\mathcal{L}}{\partial k_{z}}\times\frac{\partial k_{z}}{\partial\sigma_{1}}\\
\frac{\partial k_{z}}{\partial\sigma_{1}}= & \left(1-\lambda\right)k_{\textrm{time1}}(t,t')\frac{\partial k_{\textrm{Continuous}}}{\partial\sigma_{1}}+\lambda k_{ht}k_{\textrm{time1}}(t,t')\frac{\partial k_{\textrm{Continuous}}}{\partial\sigma_{1}}\\
\frac{\partial k_{\textrm{Continuous}}}{\partial\sigma_{1}}= & k_{\textrm{Continuous}}(x,x')
\end{align*}
\item The gradient of $\frac{\partial\mathcal{L}}{\partial\sigma_{2}}$
\begin{align*}
\frac{\partial\mathcal{L}}{\partial\sigma_{2}}= & \frac{\partial\mathcal{L}}{\partial k_{z}}\times\frac{\partial k_{z}}{\partial\sigma_{2}}\\
\frac{\partial k_{z}}{\partial\sigma_{2}}= & \left(1-\lambda\right)k_{\textrm{time2}}(t,t')\frac{\partial k_{\textrm{Categorical}}}{\partial\sigma_{2}}+\lambda k_{xt}k_{\textrm{time2}}(t,t')\frac{\partial k_{\textrm{Categorical}}}{\partial\sigma_{2}}\\
\frac{\partial k_{\textrm{Categorical}}}{\partial\sigma_{2}}= & k_{\textrm{Categorical}}(h,h')
\end{align*}
\item The gradient of $\frac{\partial\mathcal{L}}{\partial\lambda}$
\begin{align*}
\frac{\partial\mathcal{L}}{\partial\lambda}= & \frac{\partial\mathcal{L}}{\partial k_{z}}\times\frac{\partial k_{z}}{\partial\lambda}\\
\frac{\partial k_{z}}{\partial\lambda}= & -\left(k_{ht}+k_{xt}\right)+k_{ht}k_{xt}
\end{align*}
\end{itemize}


\end{document}